 \def\<{\langle}
 \def\>{\rangle}
\def\M{\mathcal{M}}
\def\N{\mathcal{N}}
\def\<{\langle}
\def\>{\rangle}
\def\beq{\begin{eqnarray*}}
\def\eeq{\end{eqnarray*}}
\definecolor{LRed}{rgb}{1,.8,.8}
\newtheorem{remark}{Remark}[section]
\newtheorem{theorem}{Theorem}[section]
\newtheorem{definition}[theorem]{Definition}
\newtheorem{lemma}[theorem]{Lemma}
\newtheorem{proof}[theorem]{Proof}
\newcommand\ci{\perp\!\!\!\perp}
\DeclareMathOperator*{\argmax}{arg\,max}
\title{The scalable Birth-Death MCMC Algorithm for Mixed Graphical Model Learning with Application to Genomic Data Integration}
\author{ Nanwei Wang \\
  Lunenfeld-Tanenbaum Research Institute\\
\and
 Laurent Briollais \\
  Lunenfeld-Tanenbaum Research Institute\\
  \and
 Helene Massam \\
  York University\\
}
\date{}
\begin{document}
\maketitle

% Nanwei Wang \\
 % Lunenfeld-Tanenbaum Research Institute\\

 %Laurent Briollais \\
 % Lunenfeld-Tanenbaum Research Institute\\
  
 % \And
 %Helene Massam \\
 % York University\\
%}

\begin{abstract}
Recent advances in biological research have seen the emergence of high-throughput technologies with numerous applications that allow the study of biological mechanisms at an unprecedented depth and scale. A large amount of genomic data  is now distributed through consortia like The Cancer Genome Atlas (TCGA), where specific types of biological information on specific type of tissue or cell are available. In cancer research, the challenge is now to perform integrative analyses of high-dimensional multi-omic data with the goal to better understand genomic processes that correlate with cancer outcomes, e.g. elucidate gene networks that discriminate a specific cancer subgroups (cancer sub-typing) or discovering gene networks that overlap across different cancer types (pan-cancer studies). In this paper, we propose  a novel mixed graphical model approach to analyze multi-omic data of different types (continuous, discrete and count) and perform model selection by extending the Birth-Death MCMC (BDMCMC) algorithm initially proposed by \citet{stephens2000bayesian} and later developed by \cite{mohammadi2015bayesian}.  We compare the performance of our method to the LASSO method and the standard BDMCMC method using simulations and find that our method is superior in terms of both computational efficiency and the accuracy of the model selection results. Finally, an application to the TCGA breast cancer data shows that integrating genomic information at different levels (mutation and expression data) leads to better subtyping of breast cancers.  
\end{abstract}

\section{Introduction}
Recent advances in biological research have seen the emergence of high-throughput technologies with numerous applications in genetics, genomics and other ``--omic" disciplines  (i.e. transcriptomic, proteomic, metabolomic, etc.). The goal of these studies is to provide a richer and unbiased view of the cell at different biological levels. The cell is often represented as a complex system with various levels of biological information communicating with each other. The DNA found within each cell contains the genetic blueprint for the entire organism. Each gene contains the information necessary to instruct the cellular machinery how to make mRNA, and in turn the protein encoded by the order of DNA bases constituting the gene. Each one of these proteins is responsible for carrying out one or more specified molecular functions within the cell. Differing patterns of gene expression (i.e., different mRNA and protein levels) in different tissues can explain differences in both cellular function and appearance. The new high-throughput technologies allow the study of biological mechanisms at an unprecedented depth and scale. 

These large amount of  genomic and other -omic data have been distributed through freely available public international consortia like The Cancer Genome Atlas (TCGA) \citet{TCGA2013}, The Encyclopedia of DNA Elements
\citet{ENCODE2011}, and The NIH Roadmap Epigenomics Mapping Consortium (Roadmap) \citet{Roadmap2010}. The Cancer Genome Atlas (TCGA), for example, is a National Institute of Health (NIH) initiative, it makes publicly available molecular and clinical information for more than 30 types of human cancers including exome (variant analysis), single nucleotide polymorphism (SNP), DNA methylation, transcriptome (mRNA), microRNA (miRNA) and proteome.

Each consortium encompasses specific types of biological information on specific type of tissue or cell, but the challenge now, is to analyze them together. In cancer research, such analysis provides an invaluable opportunity for research laboratories to better understand the developmental progression of normal cells to cancer state at the molecular level and importantly, correlate these phenotypes with tissue of origins. Thus, increasing research attention is being paid to the integrative analysis and modeling of various types of biomedical data.

The problem of integrating different types of omics data raised important statistical challenges. First, each single type of data is typically high-dimensional, meaning that the number of variables is often larger than
the number of observations. Analyzing high-dimensional data of one type is already challenging, yet analyzing multiple sources of data raises additional problems that often cannot be addressed by standard statistical methods. 
Second,the variables are often of ‘mixed types’ (e.g., continuous, count-valued, discrete, skewed continuous, bounded, among others). For example in integrative genomics, genotype data is typically
discrete, gene expression as measured via RNA-sequencing is count-valued or non-negative skewed continuous, DNA methylation data is bounded on the interval zero to one, and so forth.  Finally, biologists are often interested in some types of structure of this data, e.g. in the form or pathways and networks, and making inference about these structures in a high-dimensional space is also quite challenging.

In the context of multi-omics data integration, various statistical methods have been proposed, the two main approaches are penalized regression variable selection and Bayesian variable selection methods. Denote the disease outcome $Y$, which can be continuous,  discrete,  or time-to-event and $X$, the design matrix of $p$-dimensional genomics features such as SNP genotypes, DNA methylation and gene expressions,
among other omics measurements for $n$ individuals. In this “large p, small n” problem, the compelling challenge is to identify important features that are associated with disease, e.g. cancer outcome. 

{\it Mixed graphical models.}
In this paper, we will use graphical models to identify dependencies between given variables. Let $G=(V,E)$ denote an undirected graph, where $V=\{1,2,\cdots,p\}$ denotes the vertex set and $E\subseteq 2^V$ denotes the set of edges. Let $X=(X_v, v\in V)$ denote the p-dimensional random vector where each variable $X_v$ is indexed by $v\in V$. The distribution of $X$ is said to be Markov with respect to graph $G$ when
\[
(i,j)\not \in E \Longrightarrow X_i \ci X_j |X_{V \setminus \{i,j\}}.
\] 
In the literature, two types of undirected graphical models are well studied: Gaussian graphical model for multivariate normal data and discrete log-linear model for discrete data. Recently graphical models for mixed data, which we refer to as mixed graphical models here, become quite relevant, especially for genomic studies. The study of mixed graphical models can be traced back to the conditional Gaussian density in \citet{lauritzen1996graphical}. Indeed, in such studies, variables of various types, such as Gaussian, binary or counting need to be considered simultaneously and therefore dependencies determined Since then, there have been several papers about mixed graphical models, including \citet{fellinghauer2013stable}, \citet{lee2015learning} and \citet{cheng2017high}. In 2014, \citet{yang2014mixed} proposed mixed graphical models via an exponential family distribution and applied this model to invasive breast carcinoma data from the TCGA database. Mixed graphical models are an extension of graphical models to problems involving mixed types of variables, a situation which is quite common in real-world problems.

{\it Penalization or regularization} plays an important role in statistical inference of high-dimensional  omics data, especially in the context of multi-dimensional integration studies. As the number of variables is often larger than the number of observations, it is very likely that an over fitting problem can occur, i.e. select models with many false discovery variables and make incorrect inference. A penalized model can be formulated as

$$\hat\beta = \textrm{argmin}_\beta \{ L(\beta; Y, X) + \textrm{pen}(\beta; \lambda) \}, $$
where $L(.)$ is the loss function measuring the fit of the model and pen() is the penalty function controlling the sparsity of the model through the data-dependent tuning parameter $\lambda$. The most classical high-dimensional model selection method is the Lasso proposed by \citet{tibshirani1996regression}. The $l_1$ penalty on parameters will shrink the small MLE values of some parameters to zeros and result in sparse models to avoid over fitting. Since then, a lot of researchers have been working on various penalty variable selection methods, such as smoothly clipped absolute deviation (SCAD) penalty \citet {fan2001variable}, Adaptive Lasso \citet{zou2006adaptive} and minimax concave penalty (MCP) \citet{zhang2010nearly}. All theses methods can perform well under sparsity assumption, but model uncertainty remains a big challenge for these various penalized regression method, especially in today's big genomic data era. 

 In the {\it Bayesian} approach to model selection, we write $\M=\{M_i,i=1,\cdots,k\}$ for the model space and put a prior distribution $\pi(M)$ on $\M$. Then if $\theta \in R^d$ is the parameter of $M$, we put a prior distribution $p(\theta)$ on the d-dimensional parameter space. The data is generated from the distribution $p(x|\theta)$. Given the data $D$, we can compute the posterior distribution of models as follows:
 \begin{equation}
 \label{eq:post}
 p(M|D)=\frac{p(D|M)\pi(M)}{p(D)},
\end{equation}  
where 
\begin{equation}
p(D|M)=\int p(D|\theta)p(\theta|M)d\theta
\end{equation} 
is the marginal likelihood of the data $D$, sometimes called evidence. Ignoring the denominator in \eqref{eq:post},  we use the following formula:
\begin{equation}
\label{eq:post2}
 p(M|D)\propto p(D|M)\pi(M).
\end{equation} 
Knowing the posterior distribution $p(M|D)$, we have several ways to conduct the model selection process. One can try to maximize $p(M|D)$ to find the mode of the distribution. However, most of the time, we do not have the exact formula of $P(M|D)$, or if we do, it is too difficult to maximize. The Bayes factor is another method, which is widely used. It compares models pairwise. The Bayes factor between two models $M_1,\ M_2$ is the ratio of the evidence of two models:
\[
B_{12}=\frac{p(D|M_1)}{P(D|M_2)}=\frac{p(M_1|D)}{p(M_2|D)} \times \frac{\pi(M_2)}{\pi(M_1)}.
\] 
A large value of the Bayes factor $B_{12}$ means that model $M_1$ is preferred over $M_2$. However, pairwise comparison is impossible when the model space is large. Therefore we need to use an MCMC method to draw samples from the posterior distribution $p(M|D)$. Two MCMC methods are often used to explore statistical models with different dimensions: the Reversible Jump MCMC and the Birth-Death MCMC (\citet{stephens2000bayesian}). The review paper from \citet{cappe2002reversible} compared these two methods and studied their statistical properties. 

Recently, \citet{mohammadi2015bayesian} applied the Birth-Death MCMC method (henceforth abbreviated BDMCMC) to learn sparse Gaussian graphical models. The same method was applied to study discrete log-linear models in \citet{dobra2018loglinear}. The BDMCMC methodology that we present here differs from the BDMCMC methodology in these two papers in two major ways: first, we apply the BIC value (see \citet{wasserman2000bayesian}) and extended BIC value (see \citet{chen2008extended}) to approximate the marginal likelihood $P(D|M)$; second, a neighourhood structure learning is proposed to estimate the graphical model structure, rather than directly estimating its global structure. Both of these two modifications make the computation for the BDMCMC algorithm easier and more flexible in a high-dimensional setting. We name our method Scalable Birth-Death MCMC (henceforth abbreviated SBDMCMC) algorithm.

The remainder of this paper is organized as follows: in Section 2, we provide a review of  model selection in regressions and graphical models and define the new mixed graphical model; The SBDMCMC algorithm is described in details in Section 3; in Section 4, we  discuss the computation of SBDMCMC. Numerical simulations are presented in Section 5. The paper concludes with a real data analysis in Section 6.

\section{Preliminaries and notation}
In this section, we define our notation and recall basic concepts of graphical models.

We use  capital letters (e.g. X,Y,Z) to denote random variables, their corresponding small letters (e.g. x,y,z) to denote their values. The l-norm of the vector $\theta$ is denoted $||\theta||_l$. For example $||\theta||_1=\sum_i|\theta_i|$, $||\theta||_2=\sqrt{\sum_i\theta_i^2}$. The indicator function of a set A is $\mathbf{1}_{A}(a)=1$ for $a\in A$ and $\mathbf{1}_{A}(a)=0$ for $a\not \in A$. $|A|$ denotes the cardinality of the set A.

The undirected graph $G=(V,E)$ is  defined by its vertex set $V$ and its set of undirected edges $E\subseteq V\times V$. A subset $C\in V$ is said to be a clique if for any $i,j\in C$, $(i,j)\in E$. The neighbourhood of a vertex r is $N_r=\{t\in V|(t,r)\in E\}$. Consider a random vector $X=(X_v, v\in V)$ with components indexed by $V$, we say that the distribution of $X$ follows the pairwise Markov property with respect to $G$ if
\begin{equation}
(i,j)\not \in E \Rightarrow X_i \ci X_j|X_{V\setminus \{i,j\}}.
\label{eq:markov}
\end{equation}

If the density of $X$ is strictly positive everywhere on its domain, then all Markov properties local, global, pairwise and factorization are equivalent. Moreover by the Hammersley-Clifford theorem, we can factorize the distribution of $X$ as follows:
\[
P(x)=\frac{1}{z(\phi)} \prod_{C\in \mathcal{C}} \Phi_{C}(x_C),
\] 
where $\mathcal{C}$ is the set of cliques, $\Phi_{C}(X_C),C\in \mathcal{C}$ are potential functions that only depend on the variables in clique $C$, $z(\phi)=\int \prod_{C\in \mathcal{C}} \Phi_{C}(x_C)dx$ is called the partition function or normalization constant. We can also write the density function $P(X)$ in exponential family form:
\[
P(x)=\exp(\sum_{C\in \mathcal{C}}\theta_CB_C(x_C)-k(\theta)),
\]
where $\Phi_{C}(x_C)=\exp(\theta_CB_C(x_C))$ and $k(\theta)$ is the log of the partition function.

There are two main classes of graphical models: Gaussian  and  log-linear discrete models.
If $X$ follows the Multivariate normal distribution $\N(0, \Sigma)$, we can write the probability density function as follows:
\[
f(x|K)=\exp\{\<-\frac{1}{2}xx^t,K\> -(\frac{p}{2}\log (2\pi)-\frac{1}{2}det(K)) \},
\] 
where $K=\Sigma^{-1}$ is the canonical parameter, $\<\cdot ,\cdot \>$ denotes the inner product between the canonical parameter, $-\frac{1}{2}xx^t$ is the sufficient statistic and $(\frac{p}{2}\log (2\pi)-\frac{1}{2}det(K))$ is the log-partition function. For Gaussian models, \eqref{eq:markov} is equivalent to $(\Sigma^{-1})_{ij}=0 \text{ for all } (i,j)\not \in E.$ The distribution of $X$ belongs to the Gaussian graphical model Markov with respect to $G$
\[
\N_G=\{\N_p(0,\Sigma)|(\Sigma^{-1})_{ij}=0, \forall (i,j)\in E\}.
\]

If the data is discrete, i.e. if $X_v,v\in V$ takes its values in a finite set and $X$ follows a so-called log-linear model, then the distribution of $X$ belongs to the graphical model Markov with respect to $G$ if its density can be written as
\[
p(x)=\frac{1}{z(\theta)}\exp(\sum_{C\in \mathcal{C}} \theta_Cx_C).
\]
In particular, if the edges are the only cliques in $G$ and $X_v,v\in V$ are binary variables, then the probability mass function of X is given as follows:
\begin{equation}
\label{eq:loglinear}
p(x)=\frac{1}{z(\theta)} \exp(\sum_{v=1}^p\theta_vx_v+\sum_{(i,j)\in E}\theta_{ij}x_ix_j),
\end{equation}
where $z(\theta)=\sum_{x\in \{0,1\}^p}\exp(\sum_{v=1}^p\theta_vx_v+\sum_{(i,j)\in E}\theta_{ij}x_ix_j)$ is the partition function. 

\section{Mixed graphical models}
Recently, due to the variety of variable types considered in big data problems, mixed graphical models have received a lot of attention. The earliest study related to binary and Gaussian variables is the $Conditional$ $Gaussian$ $density$ proposed by \citet{lauritzen1996graphical}. In 2014, \citet{yang2014mixed} studied the the mixed graphical models via exponential family distributions. \citet{cheng2017high} modified the conditional Gaussian density models and gave a simplified density formula for Gaussian and binary mixed graphical models. In this section, we will first introduce the definitions of mixed graphical models given in \citet{cheng2017high} and \citet{yang2014mixed}, then we will talk about the type of mixed graphical models we use in this paper.  

\subsection{Conditional Gaussian distribution}

In the $Conditional$ $Gaussian$ $density$ models, the continuous variables follow a Gaussian distribution conditioned on discrete variables. Following this idea, other papers such as \citet{fellinghauer2013stable}, \citet{lee2015learning} and \citet{cheng2017high} focus on mixed graphical models for Gaussian and discrete random variables. As pointed out in \citet{cheng2017high},  the two former papers can be seen as special cases of the model in \citet{cheng2017high}, that we describe now.

Let $\{Z_1,Z_2,\cdots,Z_q\}$, $\{Y_1,Y_2,\cdots,Y_q\}$ denote binary variables and Gaussian variables, respectively, \citet{cheng2017high} proposed the following $Conditional$ $Gaussian$ $density$:
\begin{equation}
\label{eq:cond}
\log f(z,y) = (\lambda_0+\sum_{i}\lambda_iz_i+\sum_{i>j}\lambda_{ij}z_iz_j)+ y^t(\eta_0+\sum_{i}\eta_iz_i)-\frac{1}{2}y^t(\Phi_0+\sum_{i}\Phi_iz_i)y,
\end{equation}
where the diagonal values of the matrix $\Phi_i$ are zeros and $\lambda_0$ is the normalizing constant.
There are two simplifications in the density \eqref{eq:cond}: 
\begin{enumerate}
\item The term $\sum_{i>j}\lambda_{ij}z_iz_j$ shows that only up to two-way interactions between binary variables are considered;
\item The conditional mean and covariance matrix of the continuous variables $y$ are linear functions of binary variables $z$.
\end{enumerate}

Given the density function in \eqref{eq:cond}, the conditional distribution of any binary random variable $Z_i$ given the other binary variables $Z_{-i}$ and continuous variables $y$ can be expressed through the following logistic regression:
\[
\log \frac{p(z_i=1|z_{-i},y)}{p(z_i=0|z_{-i},y)}=\lambda_i+\sum_{j\not=i}\lambda_{ij}z_j+\sum_{\gamma=1}^p \eta_{i\gamma}y_{\gamma}-\sum_{r=1}^p\sum_{\mu=1}^p\frac{1}{2}\Phi_i^{\gamma \mu}y_{\gamma}y_{\mu},
\] 
and the conditional distribution of $Y$ given $Z$ is a multivariate Gaussian distribution with conditional mean and covariance matrix:
\[
\begin{array}{lcl}
E(Y|Z) &=& (\Phi_0+\sum_{i}\Phi_iz_i)^{-1}(\eta_0+\sum_{i}\eta_iz_i), \\
Cov(Y|Z) &=& (\Phi_0+\sum_{i}\Phi_iz_i)^{-1}.
\end{array}
\] 

In order to perform model selection in the class of conditional Gaussian distributions given in \eqref{eq:cond}, \citet{cheng2017high} proposed to use a regularized regression method for every variable, i.e. logistic regression for a binary variable and linear regression for a continuous variable.  

\subsection{Mixed graphical models via exponential family distribution}
\citet{yang2014mixed} proposed a general class of mixed graphical models: mixed graphical models via exponential family distributions.
Let $X=(X_v,v\in V)$ denote the random vector of all types of variables. \citet{yang2014mixed} assume that, conditionally on  the other variables $X_{-r}$, any variable $X_r$  follows a univariate exponential family distribution with density:
\begin{equation}
P(X_r|X_{-r})=\exp\{E_r(X_{-r})B_r(X_r)+C_r(X_r)-D_r(X_{-r})\},
\end{equation}
where the functions $E_r,B_r,C_r,D_r$ are determined by the choice of exponential family, such as Gaussian, Bernoulli or Poisson. By Theorem 1 in \citet{yang2014mixed} it holds that, if and only if this product has the form
\begin{equation}
E_r(X_{-r})B_r(X_r)=\theta_r+\sum_{t\in V\setminus r}\theta_{rt}B_t(X_t),
\label{eq:heat}
\end{equation}
then these conditional distributions are consistent with the following joint distribution
\begin{equation}
P(X;\theta)=\exp\{ \sum_{r\in V}\theta_rB_r(X_r)+ \sum_{r\not=t}\theta_{rt}B_r(X_r)B_t(X_t)+\sum_{r\in V}C_r(X_r)-k(\theta)   \}.
\label{eq:viaexp}
\end{equation}
Equations \eqref{eq:heat} and \eqref{eq:viaexp} are slightly different from those in \citet{yang2014mixed}. In order to simplify the notations, we only consider up to two-way interactions and those can easily be extended to higher order interactions. We can easily derive  formula \eqref{eq:cond} from \eqref{eq:viaexp} by an appropriate choice of the function $B_r,r\in V$.

For model selection purposes, \citet{yang2014mixed} also proposed to use regularized regression of each variable on other variables independently and  then to combine all the non-zero regression coefficients to get the graph structure of the mixed graphical model.

\subsection{Local mixed graphical models}
Even though \citet{yang2014mixed} gave the general form of the joint distribution for mixed graphical models, there are some limitations to their parametrization: the conditional Gaussian mixed graphical model can only deal with Gaussian and discrete variables while the mixed graphical model via exponential family has constraints with respect to parameters as shown in Table 1 of \citet{chen2014selection}. In this section, we will define a new type of mixed graphical model, which is applicable to high-dimensional mixed data and also possesses good statistical properties. We call this type of mixed graphical models local mixed graphical models. \\

Let $W=(W_v, v\in V)$ denote the random vector of mixed types of variables  which follows the Markov property with respect to an undirected graph $G=(V,E)$. In the following, we define the mixed graphical model locally based on the conditional distribution of each variable given the other mixed variables without specifying their joint distribution.
\begin{definition}
The local mixed graphical model is a series of conditional distributions for each variable $W_v$ given the other variables $W_{-v}$, and we assume the conditional distributions follow an exponential family distribution with density of the form
\begin{equation}
p(W_v|W_{-v})=\exp(\theta_vW_v+\sum_{j\in V\setminus v} \theta_{vj} W_vW_j+A(W_v)-k(\theta)).
\label{eq:mixed}
\end{equation}

\end{definition} 
\begin{remark}
The local mixed graphical model is an extension of the local Poisson graphical model of \citet{allen2013local}. In a high-dimensional setting, the fact that the global model follows a given probability distribution is difficult to satisfy and often requires additional constraints on the parameter space.  The other issue in specifying a global model in a high-dimensional setting is related to model learning and parameter estimation. These two problems are intractable through global MLE when the dimension of the model is very high. Therefore, even though the joint distribution of mixed graphical models is given for a global model in both papers \citet{cheng2017high} and \citet{yang2014mixed}, the model learning strategy is based on a penalized conditional likelihood for each variable, i.e, the strategy is based on local model estimates to approximate the global model parameters. For the performance of various distributed composite likelihood estimates, see \citet{massam2018local} who studied their convergence properties and showed how numerically close they are to the global MLE.
\end{remark}
In general, $W_v$ can be any type of random variable that follows an exponential family distribution. In this paper, we only consider three types of common variables: Gaussian, binary, and count. Assuming that there are $p_1$ Gaussian variables denoted by $X_i$, $p_2$ binary variables denoted by $Y_j$, $p_3$ counting variables denoted by $Z_k$. The total number of variables is $p=p_1+p_2+p_3$. We use the notation $\theta^X_i,\theta^Y_j,\theta^Z_k$ for the marginal effect parameters of the three types of variables, and the notation $\theta^{XY}_{ij},\theta^{XZ}_{ik},\theta^{YY}_{j_1j_2},\theta^{YZ}_{jk},\theta^{ZZ}_{k_1k_2}$ for the interaction effect parameters.
\begin{itemize}
\item For Gaussian variables $X$, the conditional distribution of $X|Y,Z$ follows a multivariate normal distribution $MVN(u(Y,Z),\Sigma)$. Here we assume that the covariance matrix of the Gaussian variables $X$ is fixed, and the mean is a linear function of $Y,Z$:
\[
u(Y,Z)=E(X|Y,Z)=\theta^X+\sum_{j}\theta^{XY}_{\bullet j} y_j+\sum_{k} \theta^{XZ}_{\bullet k}z_k,
\]
where $\theta^X=[\theta^X_i, \ i=1,\cdots,p_1],\theta^{XY}_{\bullet j}=[\theta^{XY}_{i j}, i=1,\cdots,p_1]$ and $\theta^{XZ}_{\bullet k}=[\theta^{XZ}_{i k}, i=1,\cdots,p_1]$ 
\item For each binary variable $Y_j$, the conditional distribution is a Bernoulli distribution with mean:
\[
u(X,Y_{-j},Z)=E(Y_j|X,Y_{-j},Z)=\frac{\exp(\theta^{Y}_j+\sum_{j_2}\theta^{YY}_{jj_2}y_{j_2}+\sum_{i}\theta^{XY}_{ij}x_i+\sum_{k}\theta^{YZ}_{jk}z_k)}{1+\exp(\theta^{Y}_j+\sum_{j_2}\theta^{YY}_{jj_2}y_{j_2}+\sum_{i}\theta^{XY}_{ij}x_i+\sum_{k}\theta^{YZ}_{jk}z_k)}
\]
or in generalized linear regression models, we write the following equation
\[
\log \frac{p(y_j=1|x,y_{-j},z)}{p(y_j=0|x,y_{-j},z)}=\theta^{Y}_j+\sum_{j_2\not =j}\theta^{YY}_{jj_2}y_{j_2}+\sum_{i}\theta^{XY}_{ij}x_i+\sum_{k}\theta^{YZ}_{jk}z_k
\]
\item For a counting variable $Z_k$, we assume that the conditional distribution follows a Poisson distribution with mean: 
\[
u(X,Y,Z_{-k})=E(Z_k|X,Y,Z_{-k})=\exp(\theta^{Z}_k+\sum_{k_2\not =k}\theta^{ZZ}_{kk_2}z_{k_2}+\sum_{i}\theta^{XZ}_{ik}x_i+\sum_{j}\theta^{YZ}_{jk}y_j).
\]
\end{itemize}
We note that the interactions between variables in the neighbourhood of each conditional distribution are not included. It is equivalent to say that we do not consider the three-way or higher interactions in the corresponding global mixed graphical models, i.e. we are working with the class of Ising models.

\section{Graphical model selection methods}
Graphical model selection is a classical problem in statistics. For different types of graphical models, there are different ways to perform model selection, we briefly describe some popular methods below.

\subsection{Model selection based on likelihood}
\citet{friedman2008sparse} first proposed the Graphical Lasso to estimate the inverse covariance matrix $K$ in Gaussian graphical models. The problem is to maximize the penalized log-likelihood in  Gaussian graphical models:
\[
K= \argmax_{K } \{\log det(K)-\<S,K\>-\lambda ||K||_1\},
\]
where S is the sample covariance matrix, $\lambda$ is the penalty term to determines the sparsity of the estimation. There are different types of penalty functions and different approaches for estimating the inverse covariance matrix. \\

For discrete graphical models, the Graphical Lasso is not suitable, as the evaluation of the normalizing constant  $k(\theta)=\sum_{x\in \mathcal{X}} \exp(\<\theta,t(x)\>)$ is an NP-hard problem. In this regard, pseudo-likelihood is traditionally used as a substitute to the global likelihood function. \citet{ravikumar2010high} used the $l_1$-regularized logistic regression to learn the neighbourhood structure for every binary variable and then combined all these neighbourhoods to obtain the global graph structure. 
This idea can be summarized as follows:
\begin{enumerate}
\item For every node $v\in V$, first build the conditional log-likelihood function
\[
l_v(\theta^v)=\sum_{i=1}^n \log(p(x_v^{(i)}|x_{V\setminus v}^{(i)})).
\]
\item Get the parameter estimate $\hat{\theta^v}$ by maximizing the following regularized log-likelihood function:
\[
\theta^v= \argmax \ l_v(\theta^v) -\lambda \lVert\theta^v\rVert_1.
\]  
\item The non-zero elements in $\hat{\theta^v}$ can give us the neighbourhood of $v$: $N_v$.
\item Get the global graph structure based on AND or OR rule from all the pseudolikelihood estimates $\hat{\theta}^v, v\in V$. For each pairwise parameter $\theta_{uv}$, the non-zero estimates are given by one of the following two rules: 
\begin{itemize}
\item AND rule:
\[
\hat{\theta}_{uv}\begin{cases} \not =0 \quad \text{if }\hat{\theta}^v_u \not=0 \text{ and } \hat{\theta}^u_v \not=0 , \\ =0, \quad \text{otherwise} \end{cases} 
\]

\item OR rule:
\[
\hat{\theta}_{uv}\begin{cases} \not=0 \quad \text{if }\hat{\theta}^v_u \not=0 \text{ or } \hat{\theta}^u_v \not=0 , \\ =0, \quad \text{otherwise} \end{cases} 
\]

\end{itemize}

\end{enumerate}

For mixed graphical models, the only neighbourhood selection proposed thus far is based on  $l_1$-regularized regression, see \citet{yang2014mixed} and \citet{cheng2017high}. Below, we propose a Bayesian variable selection approach.

\subsection{Bayesian graphical model selection} 
The Bayesian framework can be described as follows: given a set of variables $X$, we first define a prior distribution $p(G_i),i=1,2\cdots,k$ over the finite space of graphs. ${\cal{G}}=\{G_1,G_2,\cdots,G_k\}$. Second, given a graph $G$, denote $\pi(\theta|G)$ the parameter prior distribution. Lastly, the data $D=[x^{(1)},x^{(2)},\cdots,x^{(n)}]$ is generated from the distribution $p(X|\theta)$. Based on Bayes' theorem, the posterior distribution of a graph $G$ given data $D$ is
\begin{equation}
\label{posterior}
p(G|D)=\frac{p(D|G)p(G)}{\sum_{j=1}^k p(D|G_j)p(G_j)},
\end{equation}
where $$p(D|G_j)=\int \prod_{i=1}^n p(x^{(i)}|\theta)\pi(\theta|G_j) d\theta,$$ is the marginal likelihood of the data given the model $G_j$. It is also called the evidence of model $G_j$. Based on the posterior distribution on ${{\cal G}}$, one can perform pairwise comparison between  any two graphs $G_1$, $G_2$:
\[
\frac{p(G_1|D)}{p(G_2|D)}=\frac{p(D|G_1)}{p(D|G_2)}\times \frac{p(G_1)}{p(G_2)}.
\]

When the graph structure space is very large, pairwise model comparison is not efficient to find the best model. Markov Chain Monte Carlo (MCMC) can be used to simulate samples from the posterior distribution. These samples can then be used to find the model with largest posterior probability or to average the models with high posterior probabilities. \\

The BDMCMC algorithm was first proposed   for  Gaussian graphical model selection in \citet{mohammadi2015bayesian}. They studied the Birth-Death MCMC method for Gaussian graphical models: they first sampled graph structures using a Poisson process, then sampled the  inverse covariance matrix $K$ from the G-wishart distribution. A similar BDMCMC approach was applied to discrete log-linear model selection in \citet{dobra2018loglinear}. For mixed graphical models, there is no posterior distribution we can sample the parameters from. In fact, Bayesian model selection methods have not yet been proposed for mixed graphical models. In the next section, we present our modified BDMCMC method to learn mixed graphical models.

\section{A Scalable Birth-Death MCMC algorithm for mixed graphical models}

In this section we introduce a new local Bayesian model selection method which we call the Scalable Birth-Death MCMC (henceforth abbreviated SBDMCMC) in details. 
While the BDMCMC method samples the global graph structure as in \citet{mohammadi2015bayesian} and \citet{dobra2018loglinear}, we will use our method to select the neighbourhood of each variable.\\

Consider the mixed variables indexed by the vertex set V, our goal is to fit a mixed graphical model $G=(V,E)$ given a dataset $D$ from the distribution of $X$. For high dimensional problems, a neighbourhood selection approach has a large computational advantage over a global graph structure learning method. The neighbourhood learning process is as follows:
\begin{enumerate}
\item For every variable $X_v$, consider all the variables in $X_{V\setminus v}$ as the candidate covariates to build the neighbourhood of $X_v$, then the neighbourhood structure space is the power set of $X_{V\setminus v}$, which we denote as $\mathcal{P}(X_{V\setminus v})$. The neighbourhood learning method seeks to find the neighbourhood structure $N_v$ and to build the conditional distribution $p(X_v|X_{N_v})$ given the data $D$;
\item Given a neighbourhood $N_v\in \mathcal{P}(X_{V\setminus v})$, we derive the posterior distribution:
\[
p(N_v|D)\propto p(D|N_v)p(N_v),
\]
where $p(D|N_v)=\int \prod_{i=1}^n p(x_v^{(i)}|x_{N_v}^{(i)};\theta)\pi(\theta|N_v) d\theta $;

\item  Design a MCMC algorithm with stationary distribution $p(N_v|D)$ and get a sample of $N_v$: $\{N_v^{(1)},N_v^{(2)},\cdots,N_v^{(n)}\}$. Use Bayesian model averaging to decide which variables are in the neighbourhood of $X_v$:

\[
p(u\in N_v)=\sum_{i}^{n} \mathbf{1}_{N_v^{(i)}}(u)p(N_v^{(i)}|D), \quad u\in V\setminus v.
\]
In this paper, we set the probability value threshold as $0.5$, i.e.

\[
\begin{cases}u\in N_v \quad \mbox{if}\quad p(u\in N_v)\geq 0.5 \\ u\not \in N_v \quad \mbox{if}\quad p(u\in N_v)< 0.5. \end{cases}
\]
\item  Get the global graph structure based on the AND or OR rule from all the neighbourhoods $N_v, v\in V$:
\begin{itemize}
\item AND rule:
\[
\{u,v\} \begin{cases} \in E \quad \text{if } u\in N_v \text{ and } v \in N_u, \\ \not \in E, \quad \text{otherwise} \end{cases} 
\]

\item OR rule:
\[
\{u,v\} \begin{cases} \in E \quad \text{if } u\in N_v \text{ or } v \in N_u, \\ \not \in E, \quad \text{otherwise}. \end{cases} 
\]
\end{itemize}

\end{enumerate}

Step 3 is where we need the Birth-Death MCMC algorithm. BDMCMC is a continuous time Markov process in the neighbourhood space. This process explores the space by adding and removing variables corresponding to birth and death jumps. Given the current neighbourhood $N_v$, the birth and death events are defined by the following independent Poisson process:
\begin{itemize}
	\item Birth event: each node $u \not \in N_v$ is born independently of the other variables as a Poisson process with rate $B_{u}(N_v)$. If this birth event of variable $X_{u}$ happens, the process jumps to the new state: $N_v\cup u$ 
	\item Death event: each node $u \in N_v$ dies independently of the other variables as a Poisson process with rate $D_{u}(N_v)$. If this death event of variable $X_{u}$ happens, the process jumps to the new state: $N_v\setminus u$.
\end{itemize}

The time to the next birth/death jump follows the exponential distribution with mean $$\lambda=\frac{1}{\sum_{u\not \in N_v}B_{u}(N_v)+\sum_{u \in N_v}D_{u}(N_v)},$$

and the probability of the birth and death events are respectively
\[
\begin{array}{lcl}
p_{N_v}(u) &=& \frac{B_{u}(N_v)}{\sum_{u\not \in N_v}B_{u}(N_v)+\sum_{u \in N_v}D_{u}(N_v)}, \quad u\not \in N_v \\
q_{N_v}(u) &=& \frac{D_{u}(N_v)}{\sum_{u\not \in N_v}B_{u}(N_v)+\sum_{u \in N_v}D_{u}(N_v)}, \quad u\in N_v 
\end{array}
\]

Here, we use the notation $p_{N_v}(u)$ to denote the probability of adding variable $u$ to neighbourhood $N_v$, i.e. $p_{N_v}(u)$ is the probability of this Markov process jumping from $N_v$ to $N_v\cup u$. Similarly, $q_{N_v}(u)$ is the probability of this Markov process jumping from $N_v$ to $N_v\setminus u$.   With the above probabilities, we can define the transition kernel probability matrix $K$. Notice that in the Birth-Death Markov chain, the jump can only happen between two neighbour states that differ by one variable. i.e. $K_{N_{v} N_{v} }=0$ and $K_{N_v N_{u}}=0$, where $| N_v\setminus N_{u}|>1$ or $| N_{u}\setminus N_v|>1$. In order to simplify the notation, we will write $K_{vv}\text{ for }K_{N_{v} N_{v} }$ and $K_{u v}\text{ for }K_{N_{u} N_{v} }$. Let us give a small example here. Suppose there are two variables $X_1,X_2$ in $X_{V\setminus v}$, the neighbourhood space (or states of the Markov chain) is $\{\emptyset,\{X_1\},\{X_2\},\{X_1,X_2\}\}$, the transition kernel probability matrix $K$ is then equals to
\[
\kbordermatrix{
    & \emptyset & \{X_1\} & \{X_2\} &  \{X_1,X_2\} \\
    \emptyset & 0 & p_{\emptyset}(X_1) & p_{\emptyset}(X_2) & 0  \\
    \{X_1\} & q_{\{X_1\}}(X_1) & 0 & 0 & p_{\{X_1\}}(X_2)  \\
    \{X_2\} & q_{\{X_2\}}(X_2) & 0 & 0 & p_{\{X_2\}}(X_1)  \\
    \{X_1,X_2\} & 0 & q_{\{X_1,X_2\}}(X_2)  & q_{\{X_1,X_2\}}(X_1) & 0 
  }
\]
Now we recall two important concepts in Markov chains.
\begin{definition}
\label{def:stationary}
Let $\pi$ denote the distribution of the states, we say $\pi$ is the stationary distribution of the Markov chain, if 
\begin{equation}
\pi=\pi K
\label{eq:stationary}
\end{equation}

\end{definition}

Next we give the definition of the detailed balanced equation for a Markov chain.
\begin{definition}
Let $\pi$ be the stationary distribution, then the Markov chain is said to be reversible or to satisfy the detailed balanced condition if 
\begin{equation}
\pi_x K_{xy}=\pi_yK_{yx},
\label{eq:detailed}
\end{equation}

where $x,y$ are any two states of the Markov chain.
\end{definition}

Note that the detailed balanced condition is stronger than the condition that $\pi$ has a stationary distribution. So we have the following lemma:
\begin{lemma}
Let $K$ be the transition kernel probability matrix, then \eqref{eq:detailed} implies \eqref{eq:stationary}
\end{lemma}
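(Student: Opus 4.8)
The plan is to verify the stationarity equation \eqref{eq:stationary} one coordinate at a time, using the detailed balance relation \eqref{eq:detailed} as the key substitution and the stochasticity of $K$ to collapse the resulting sum. Since $\pi$ is a row vector and $K$ the transition kernel, the identity $\pi = \pi K$ holds if and only if, for every state $y$, the $y$-th coordinate satisfies $\pi_y = \sum_x \pi_x K_{xy}$, where the sum ranges over all states $x$ of the chain. So it suffices to establish this scalar equation for an arbitrary fixed $y$.

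First I would fix $y$ and expand the $y$-th coordinate of $\pi K$, namely $(\pi K)_y = \sum_x \pi_x K_{xy}$. Then, applying the detailed balance condition \eqref{eq:detailed} term by term, I would replace each summand $\pi_x K_{xy}$ by $\pi_y K_{yx}$, obtaining $(\pi K)_y = \sum_x \pi_y K_{yx} = \pi_y \sum_x K_{yx}$, where $\pi_y$ factors out of the sum because it does not depend on the summation index $x$.

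The one substantive ingredient — and the only place a hypothesis really enters — is that $K$ is a transition kernel probability matrix, so each of its rows sums to one: $\sum_x K_{yx} = 1$. Substituting this gives $(\pi K)_y = \pi_y$ for every $y$, which is exactly \eqref{eq:stationary}, and the lemma follows. There is no genuine obstacle here; once the row-sum property is invoked the argument is a one-line computation. The only point needing minor care is the bookkeeping of the summation index, namely confirming that the sum defining $(\pi K)_y$ and the row-sum $\sum_x K_{yx}$ both range over the same index set of states, which is immediate from the definition of $K$. I would also note in passing that the converse fails in general, which is precisely the remark preceding the lemma that detailed balance is strictly stronger than stationarity.
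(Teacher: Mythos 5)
Your proof is correct and is essentially identical to the paper's: both substitute detailed balance into $\sum_x \pi_x K_{xy}$ to get $\pi_y \sum_x K_{yx}$ and then use row-stochasticity of $K$ to conclude $(\pi K)_y = \pi_y$ for every $y$. Your write-up is merely more explicit about the row-sum step, which the paper leaves implicit.
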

\begin{proof}
\[
\sum_{x}\pi_x K_{xy}=\sum_{x}\pi_yK_{yx}=\pi_y\;\;\forall y,
\]
and thus $\pi K=\pi$.
\end{proof}
To ensure that the BDMCMC converges to the posterior distribution of the neighbourhood structure given the data, we give the following theorem.
\begin{theorem}
Given the SBDMCMC process as described in step 4, let p be the number of variables and let $\mathcal{P}(X_{V\setminus v})$ be the state space.
If the birth/death rates are defined as follows: for any neighbourhood structure $N_v\in \mathcal{P}(X_V\setminus v)$,
\begin{equation}
\begin{array}{lcl}
B_{u}(N_v) &=&\cfrac{1}{p-1} \cfrac{p(N_v\cup u|D)}{p(N_v|D)}  \quad \forall u\not \in N_v\\[20pt]

D_{u}(N_v) &=& \cfrac{1}{p-1} \cfrac{p(N_v\setminus u|D)}{p(N_v|D)} \quad \forall u \in N_v ,
\end{array}
\label{eq:bdrate}
\end{equation}
 then the stationary distribution of the above BDMCMC is $\{p(N_v|D),N_v\in \mathcal{P}(X_V\setminus v)\}$.
\end{theorem}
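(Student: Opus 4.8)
The plan is to reduce the claim to the detailed balance condition \eqref{eq:detailed} and then invoke the preceding Lemma, which guarantees that detailed balance implies $\pi = \pi K$, i.e. that $\pi$ is stationary in the sense of \eqref{eq:stationary}. So it suffices to exhibit detailed balance for the candidate stationary distribution $\pi(N_v) = p(N_v|D)$ against the transition kernel $K$ of the embedded jump chain. The first observation is that $K$ is supported only on pairs of states differing by exactly one variable: the diagonal entries vanish, and $K_{xy} = 0$ whenever $x$ and $y$ differ in more than one element. Hence for any pair of states that are not birth/death neighbours both sides of \eqref{eq:detailed} are zero and the condition holds trivially, and the entire content reduces to checking \eqref{eq:detailed} on an adjacent pair $N_v$ and $N_v \cup u$ with $u \notin N_v$.

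On such a pair I would write the two transition probabilities explicitly from their definitions. The forward move $N_v \to N_v\cup u$ is the birth of $u$, occurring with probability $K_{N_v, N_v\cup u} = p_{N_v}(u) = B_{u}(N_v)/\Lambda(N_v)$, where $\Lambda(\cdot)$ denotes the total event rate $\sum_{u\notin N_v}B_u(N_v)+\sum_{u\in N_v}D_u(N_v)$ appearing in the denominators of $p_{N_v}$ and $q_{N_v}$. The reverse move $N_v\cup u \to N_v$ is the death of the just-born node $u$ out of the enlarged neighbourhood, occurring with probability $K_{N_v\cup u, N_v} = q_{N_v\cup u}(u) = D_{u}(N_v\cup u)/\Lambda(N_v\cup u)$. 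Substituting the birth/death rates from \eqref{eq:bdrate} into $\pi(N_v)\,K_{N_v, N_v\cup u}$ and into $\pi(N_v\cup u)\,K_{N_v\cup u, N_v}$, the factor $\tfrac{1}{p-1}$ is common to both, and the posterior terms are arranged so that the ratio $p(N_v\cup u|D)/p(N_v|D)$ from the birth rate and the ratio $p(N_v|D)/p(N_v\cup u|D)$ from the reverse death rate interact with the weights $\pi(N_v)$ and $\pi(N_v\cup u)$; the goal is to see both sides collapse to a single common expression, thereby establishing \eqref{eq:detailed} on every adjacent pair.

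The step that requires the most care — and the one I expect to be the main obstacle — is the bookkeeping in this cancellation. One must correctly identify the reverse of each birth as a death rate evaluated at the \emph{enlarged} state $N_v\cup u$ rather than at $N_v$, since $D_u$ is only defined for $u$ in the current neighbourhood, and one must handle the normalizing rates $\Lambda(N_v)$ and $\Lambda(N_v\cup u)$ consistently, as they genuinely differ between the two adjacent states. Once \eqref{eq:detailed} has been verified on every adjacent pair (and holds trivially otherwise), summing over the starting state exactly as in the Lemma gives $\sum_x \pi_x K_{xy} = \pi_y$ for every $y$, hence $\pi K = \pi$, and therefore $\{p(N_v|D) : N_v\in\mathcal{P}(X_{V\setminus v})\}$ is the stationary distribution of the SBDMCMC, as claimed.
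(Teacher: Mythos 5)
Your plan --- verify detailed balance \eqref{eq:detailed} on adjacent pairs and then invoke the Lemma to conclude stationarity \eqref{eq:stationary} --- is the natural route for birth--death samplers, but it is the wrong route for the rates actually defined in \eqref{eq:bdrate}, and the step you yourself flag as ``the main obstacle'' is exactly where it collapses. Carry out the cancellation you postpone. With $\pi(N_v)=p(N_v|D)$ and the raw rates as kernel entries,
\[
\pi(N_v)\,B_u(N_v)=p(N_v|D)\cdot\frac{1}{p-1}\cdot\frac{p(N_v\cup u|D)}{p(N_v|D)}=\frac{p(N_v\cup u|D)}{p-1},
\qquad
\pi(N_v\cup u)\,D_u(N_v\cup u)=\frac{p(N_v|D)}{p-1},
\]
and these are unequal unless $p(N_v\cup u|D)=p(N_v|D)$. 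Using the normalized jump probabilities you wrote, with $\Lambda(\cdot)$ denoting the total event rate, the two sides become $\frac{p(N_v\cup u|D)}{(p-1)\Lambda(N_v)}$ and $\frac{p(N_v|D)}{(p-1)\Lambda(N_v\cup u)}$, and the factors $\Lambda(N_v)$, $\Lambda(N_v\cup u)$ do not cancel either. This is not a bookkeeping difficulty that more care will resolve: unlike the standard BDMCMC of \citet{mohammadi2015bayesian}, where birth rates are constant and death rates are \emph{inverse} posterior ratios (exactly the structure reversibility requires), the rates \eqref{eq:bdrate} make \emph{both} the birth and the death rate a ``forward'' ratio, posterior of the proposed state over posterior of the current state. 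The resulting chain is simply not reversible with respect to $\{p(N_v|D)\}$, so \eqref{eq:detailed} is false for this kernel and the Lemma cannot be applied.

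What does hold --- and what the paper proves --- is the global balance identity \eqref{eq:stationary}, verified directly with no pairing of forward and reverse moves. The paper takes the kernel entries to be the rates themselves: $K_{N_v\cup u,N_v}=D_u(N_v\cup u)$ for $u\notin N_v$, $K_{N_v\setminus u,N_v}=B_u(N_v\setminus u)$ for $u\in N_v$, and all other entries zero. The key observation is that every incoming flow into $N_v$ then has the \emph{same} value,
\[
p(N_v\cup u|D)\,D_u(N_v\cup u)=\frac{p(N_v|D)}{p-1}
\quad\text{and}\quad
p(N_v\setminus u|D)\,B_u(N_v\setminus u)=\frac{p(N_v|D)}{p-1},
\]
and since $N_v$ has exactly $p-1=|V\setminus \{v\}|$ neighbouring states, one for each $u$, summing the incoming flows gives $\sum_{N_v'}p(N_v'|D)\,K_{N_v',N_v}=p(N_v|D)$, which is \eqref{eq:stationary}. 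So the correct argument aggregates all incoming terms and uses the count $p-1$; reversibility is never invoked, and indeed fails. If you want to salvage a detailed-balance proof, you would have to change the sampler --- for instance keep $B_u$ as in \eqref{eq:bdrate} but set $D_u(N_v)\equiv\frac{1}{p-1}$, which does satisfy \eqref{eq:detailed} --- but that is a different process from the one in the theorem.
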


\begin{proof}
We need to prove that $p(N_v|D)$ satisfies equation \eqref{eq:stationary}, i.e.,
\[
p(N_v|D)=\sum_{N^{'}_v \in \mathcal{P}(X_V\setminus v)}p(N^{'}_v |D)K_{N^{'}_v ,N_v }.
\]
As mentioned above, not all the entries of $K_{N^{'}_v ,N_v }$ are non-zeros. We have
\[
\begin{array}{lcl}
K_{N_v\cup u,N_v }&=& D_u(N_v\cup u), \quad \forall u\not \in N_v, \\[20pt]
K_{N_v\setminus u,N_v }&=& B_u(N_v\setminus u), \quad \forall u \in N_v,
\end{array}
\]
and all other entries are zero. Therefore, since the transition probabilities involving the addition or deletion of more than one vertex are equal to zero, we have
\[
\begin{array}{lcl}
\sum_{N^{'}_v \in \mathcal{P}(X_V\setminus v)}p(N^{'}_v |D)K_{N^{'}_v ,N_v }&=&
\sum_{u \not \in N_v}p(N_v\cup u|D)K_{N_v\cup u,N_v } +\sum_{u\in N_v}p(N_v\setminus u|D)K_{N_v\setminus u,N_v } \\[10pt]
&=& \sum_{u \not \in N_v} p(N_v\cup u|D)\frac{1}{|X_{V\setminus v}|} \frac{p(N_v|D)}{p(N_v\cup u|D)} \\
& &+\sum_{u\in N_v}p(N_v\setminus u|D) \frac{1}{|X_{V\setminus v|}}\frac{p(N_v|D)}{p(N_v\setminus u|D)}\\[10pt]
&=&\frac{1}{p-1}( \sum_{u\not \in N_v}p(N_v|D)+\sum_{u \in N_v}p(N_v|D))\\[10pt]
&=&  p(N_v|D).
\end{array}
\]

\end{proof}

\section{Computation of the Birth-Death rate}
To simplify the notation, let $r_u(N_v^{(0)},\theta_0)=\frac{p(N_v^{(1)}|D)}{p(N_v^{(0)}|D)}$ denote the Birth or Death rate from the current neighbourhood structure $N_v^{(0)}$ jump to the new neighbourhood structure $N_v^{(1)}$. We have 
\[r_u(N_v^{(0)},\theta_0)=
\begin{cases}B_u(N_v^{(0)},\theta_0) \quad & u\not\in N_v^{(0)}, \\
D_u(N_v^{(0)},\theta_0) \quad & u\in N_v^{(0)}.
\end{cases}
\]
In our setting, here the model denotes the neighbourhood structure of a given random variable $X_v$.

In this section, we will offer a fast and accurate estimation of $r_u(N_v^{(0)},\theta_0)$:
\[
r_u(N_v^{(0)},\theta_0)=\frac{p(N_v^{(1)}|D)}{p(N_v^{(0)}|D)}=\frac{p(D|N_v^{(1)})}{p(D|N_v^{(0)})}\times \frac{p(N_v^{(1)})}{p(N_v^{(0)})},
\]
so the change rate $r_u$ is the product of Bayes factor $BF(M_1,M_0)$ and the ratio of model priors.

\subsection{BIC and Extended-BIC}
In most cases, computing the exact Bayes factor value is difficult. In regression models, we can use the Bayesian information criterion (BIC) proposed in \citet{schwarz1978estimating} to approximate $p(D|M)$.
\[
\log p(D|M)= l(\hat{\beta})-\frac{d}{2}\log n +\mathcal{O}(1),
\]
where $l(\hat{\beta})$ is the log-likelihood function evaluated at the MLE of $\beta$, $d$ is the dimension of the regression, i.e. the length of $\beta$ and $n$ is the sample size. Therefore, we can use the BIC value $BIC(M)=-2l(\hat{\beta})+d\log(n)$ to approximate the marginal likelihood:
\[
p(D|M)\approx \exp (-BIC(M)/2).
\]
This approximation does not require any  integration nor does it depend on the prior of the parameters in the model. The error term $\mathcal{O}(1)$ does not converge to 0 as $n \to \infty$, but it has a relatively small value compared to $\log p(D|M)$ as $n\to \infty$. However, for small-n-large-p problems, which is the main focus of this paper, the BIC becomes overly liberal and fails to perform variable selection. \citet{chen2008extended} proposed the Extended Bayesian Information Criteria(E-BIC), which takes into account both the number of unknown parameters and the complexity of the model space, for small-$n$-big-$p$ problems. The E-BIC formula is as follows:
\begin{equation}
EBIC(M,\theta)=-2\log L(\theta))+d\log(n)+2\gamma\log \tau(M), \ 0\leq\gamma \leq 1,
\end{equation}
where $\tau(M)$ is the number of models with the same dimension $d$ as model $M$. In regression models, assume the number of variables in model $M$ is $k$, then $\tau(M)={p \choose k}$.

\subsection{Priors on model space $\M$}
In small-n-big-p problems, it seems more natural to choose priors on the model space that to put more weight on models with fewer variables, i.e. we would like to choose a prior that gives us a smaller neighbourhood of each variable $X_v$. We offer three options in this paper: 

\begin{itemize}
\item[1.] Let $k$ denote the number of variables in the neighbourhood structure model $M$, the prior is 
\[
p(M)\propto a^k, a\in (0,1].
\]
The prior is similar to the one given in \citet{dobra2018loglinear}. A smaller $a$ value will put less prior weight on a dense model. We will use the notation "DM prior" for this prior.\\

\item[2.] The second prior is a modified version of one of the prior presented by \citet{nan2014variable}:
\[
p(M) \propto \exp(-a C_M),
\]
where $0\leq a \leq 1$,  $C_M=\log{p \choose k}+2\log(k)$.
This prior is similar to the E-BIC idea. It takes the model complexity into consideration.  The term $C_M$ plays a similar role to the term $\tau(M)$ in E-BIC. This prior will be denoted the "NY prior".\\
\item[3.] The third prior is given by \citet{scott2010bayes}:
\[
p(M)=a^{k}(1-a)^{p-k}, \quad 0\leq a\leq 1.
\]
Its rationale is to treat the variable inclusion as exchangeable Bernoulli trials with common success probability $a$. This prior will be denoted the "SB prior." In the real data analysis Section \ref{sec:real}, we will only use the DM prior with $a=0.5$
\end{itemize}

All three priors are used and compared in the sensitive analysis in Section \ref{sec:prior}. In the real data analysis (Section \ref{sec:real}), we only use the DM prior with $a=0.5$

\section{Numerical Simulations}
We compare the model selection performance of our SBDMCMC method with the standard BDMCMC method given in the R package \textit{"BDgraph"} of \citet{mohammadi2019bdgraph} and with a neighbourhood selection approach based on $l_1$-penalized regression method (MGM) given in the R package \textit{"‘mgm"} of \citet{haslbeck2015mgm} under various scenarios. Here, an $F_1$-score \citet{baldi2000assessing} is used to evaluate the performance of the three methods:
\[
F_1=\frac{2TP}{2TP+FP+FN},
\]
where TP, FP, FN are the true positive, false positive, false negative rates, respectively.

\subsection{Simulation of Gaussian graphical models}
Scale free networks and random networks are the most common networks in real life. In this section, we simulate a Gaussian graphical model Markov with respect to a scale free network and a random network as shown in Figure \ref{fig:gn} and Figure \ref{fig:gn2}. 

\begin{figure}[H]
\subfloat[Scale free network with 100 variables]{\includegraphics[width=0.5\linewidth]{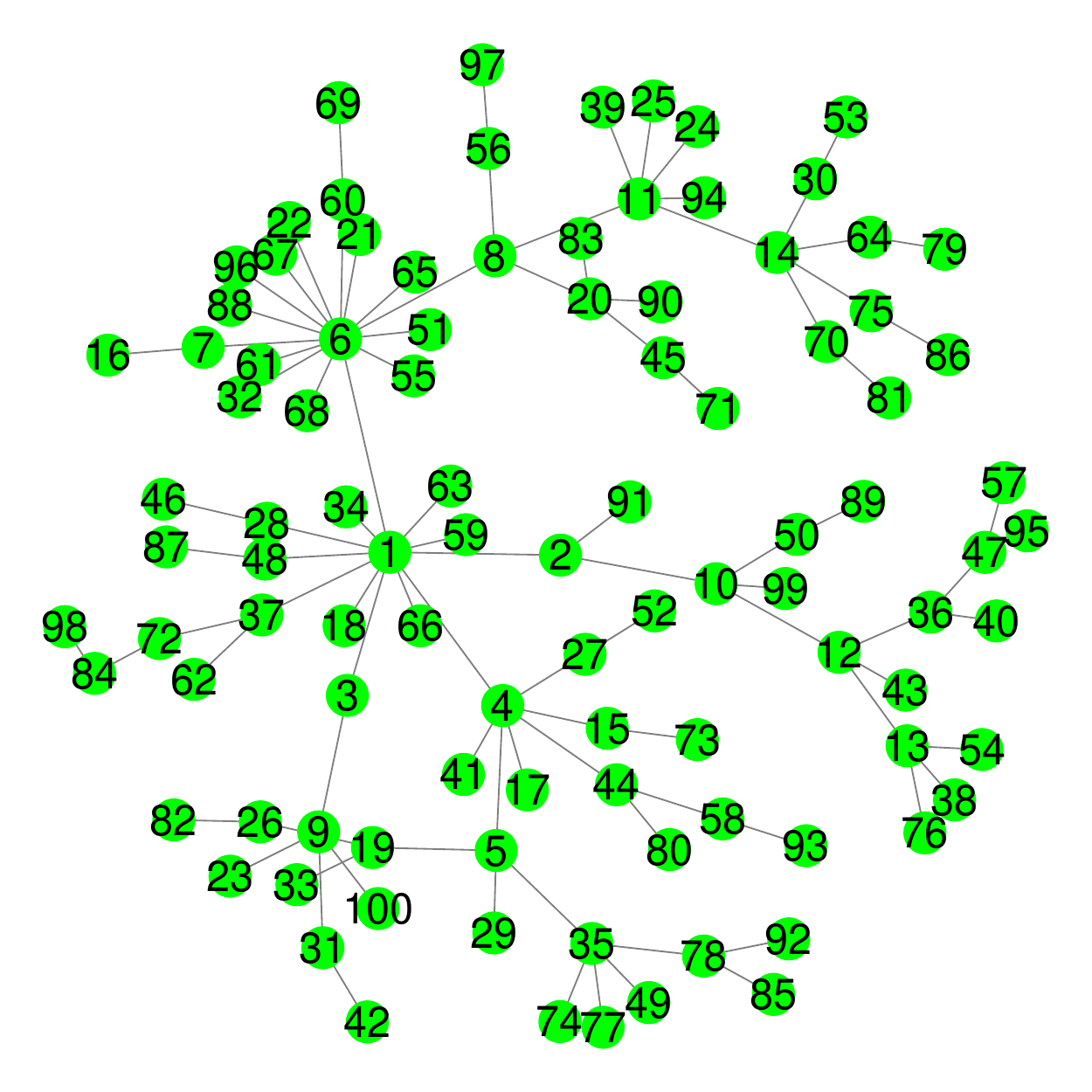}\label{fig:gn}}
\subfloat[Random network with 100 variables]{\includegraphics[width=0.5\linewidth]{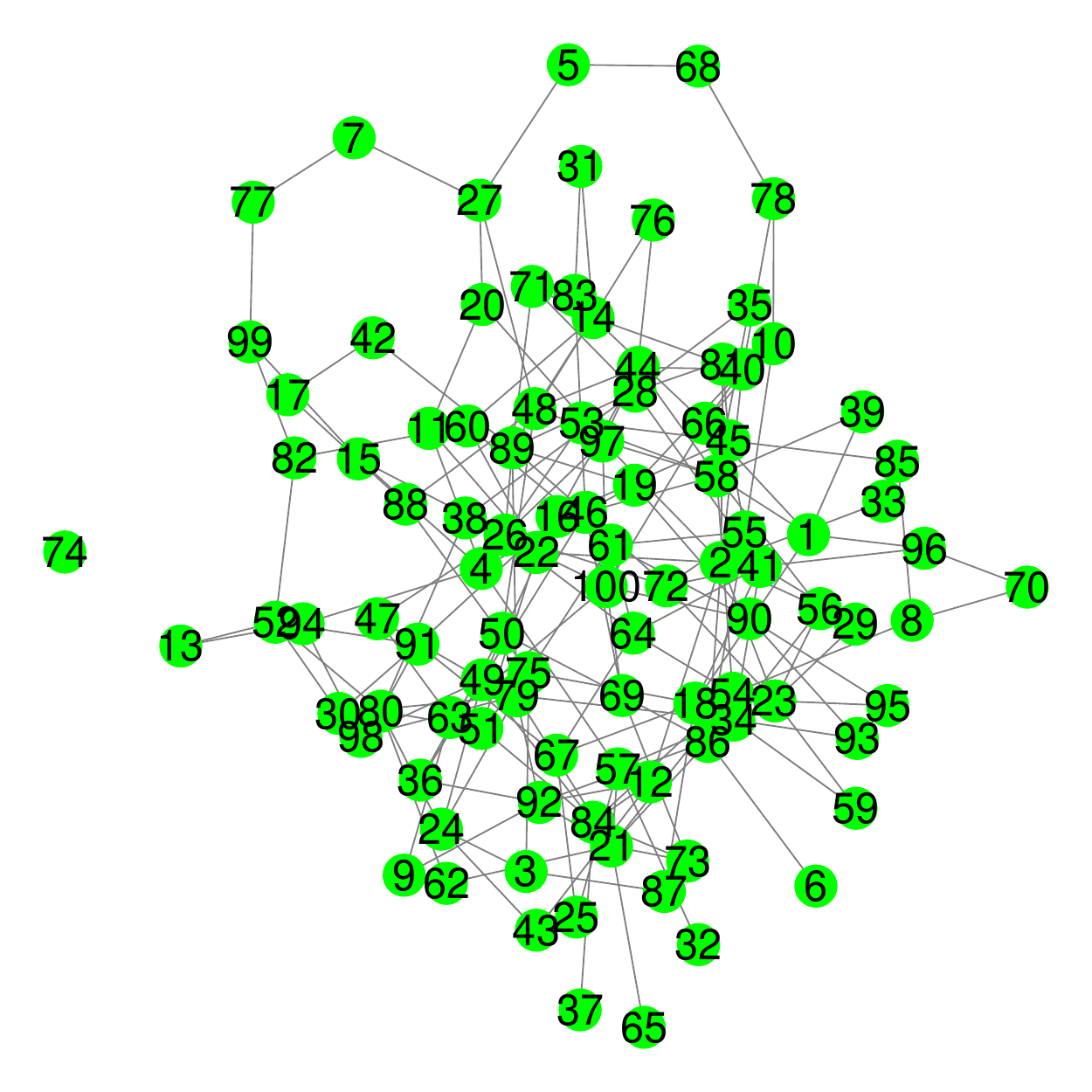}\label{fig:gn2}}
\caption{Graph structures used in the simulation of Gaussian graphical models}
\end{figure}

The setting of the simulations in Gaussian graphical model is as follows:
\begin{enumerate}
\item Generate the population inverse covariance matrix $K$: Given the graph structure $G$ as shown in Figure \ref{fig:gn} and Figure \ref{fig:gn2},  generate the matrix $K$ for the model Markov with respect to $G$ based on the conditions: $$ \begin{array}{lcl}
 K_{ij}=0 &,\ & (i,j)\not \in E,\\
  K_{ij}\not =0 &,\ & (i,j) \in E.
\end{array}$$
For the non-zero entries of $K$, we generate a random number from the standard normal distribution. 
\item Generate data $D$ from the multivariate normal distribution $\N(0, K^{-1})$ with various sample sizes equal to $200,500,1000,2000$ or $3000$.
\item Perform model selection with three methods: SBDMCMC, BDMCMC, MGM given the data $D$, and compute the $F_1$ score for each of the three methods.
\item Repeat Steps 3 and 4 fifty times to get and average value of the $F_1$ score.
\end{enumerate}

The results of these simulations are shown in Figure \ref{fig:gaussianf1} and Figure \ref{fig:gaussianf2}. From the plot, we can see that our SBDMCMC method is the best out of the three methods for the scale free network. For the random network, SBDMCMC is a little worse  than BDMCMC, but better than the MGM method. 
\begin{figure}[H]
\subfloat[scale-free network ]
{\includegraphics[width=0.5 \linewidth]{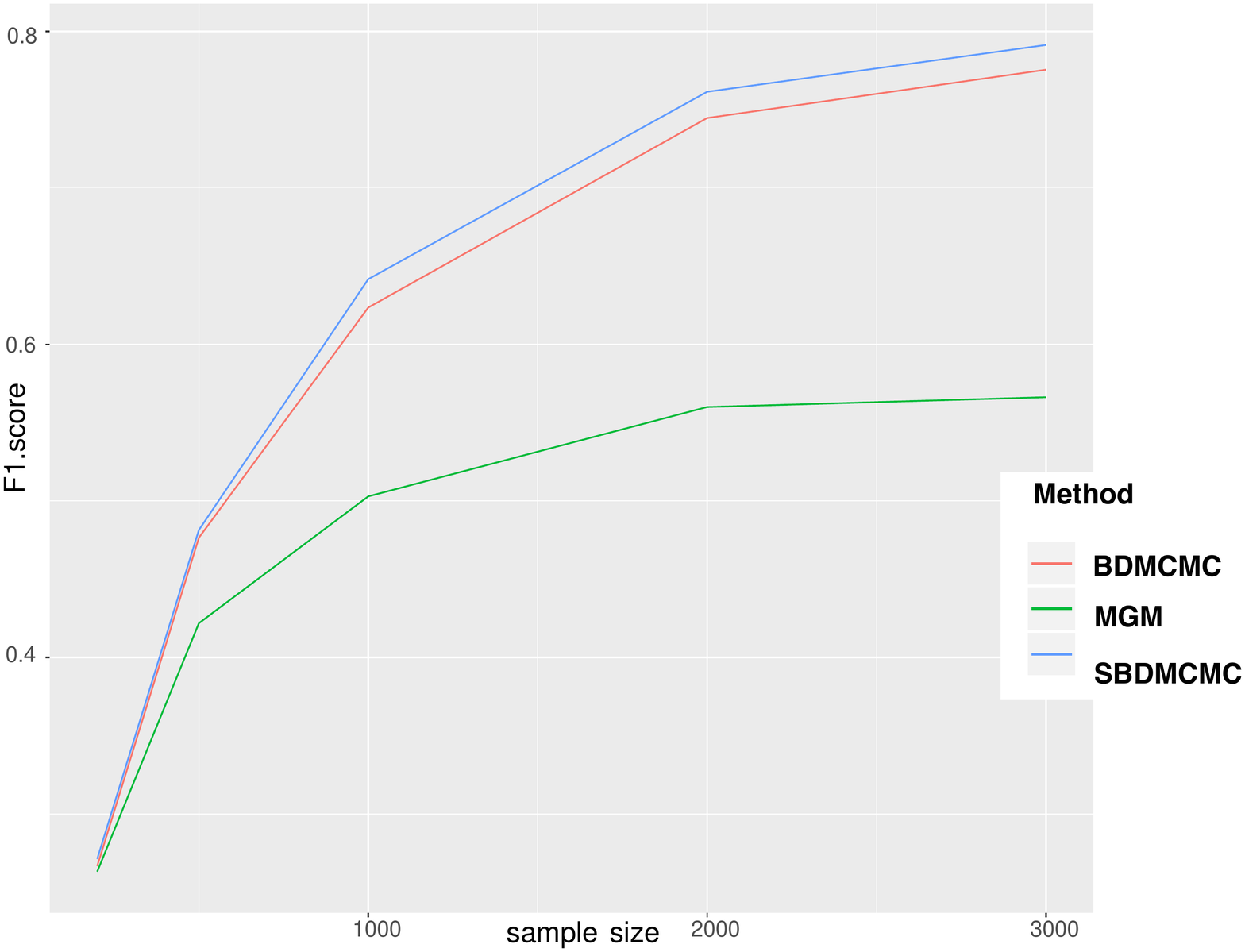} \label{fig:gaussianf1}}
\subfloat[Random network  ]
{\includegraphics[width=0.5 \linewidth]{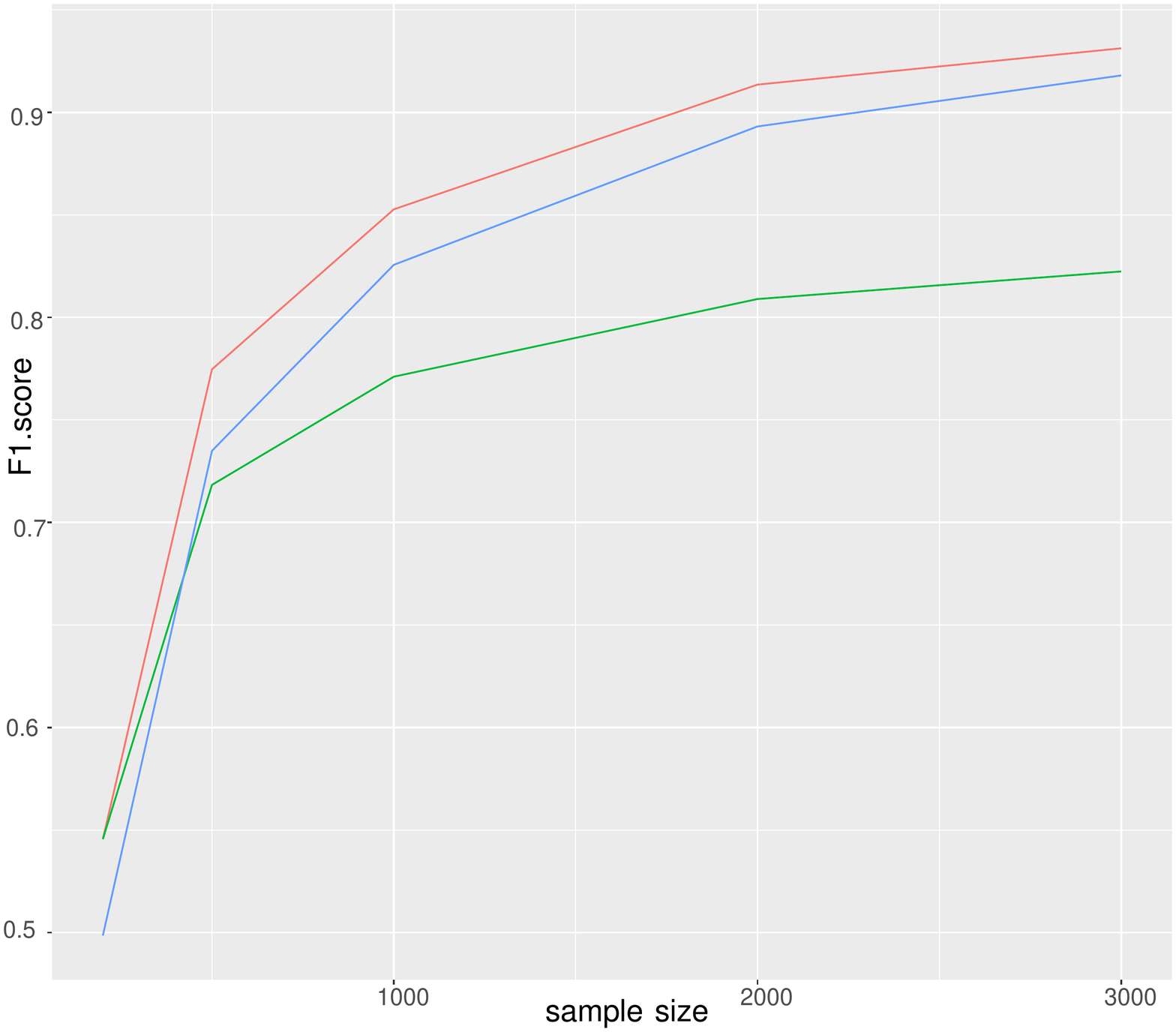} \label{fig:gaussianf2}}
\caption{$F_1$ score comparison in Gaussian graphical models}
\end{figure}

\subsection{Simulation of discrete graphical models}

The simulation of discrete graphical models  is very similar to the simulation of Gaussian graphical models. First, we generate a scale free network graph and a random network graph $G=(V,E)$ as show in Figure \ref{fig:gn} and \ref{fig:gn2}, second, we sample log-linear parameters $\theta=\{\theta_v,v\in V,\theta_{uv},(u,v)\in E\}$ from a standard normal distribution, third, we sample data with different sample sizes from the log-linear model with probability density function \eqref{eq:loglinear}, in the end, we use different methods to recover the generating models.

The results of this simulation is given in  Figure \ref{fig:binaryf1} and Figure \ref{fig:binaryf2}. Our SBDMCMC performs the best out of three methods for discrete graphical models. 
\begin{figure}[H]
\subfloat[Scale-free network  ]
{\includegraphics[width=0.5 \linewidth]{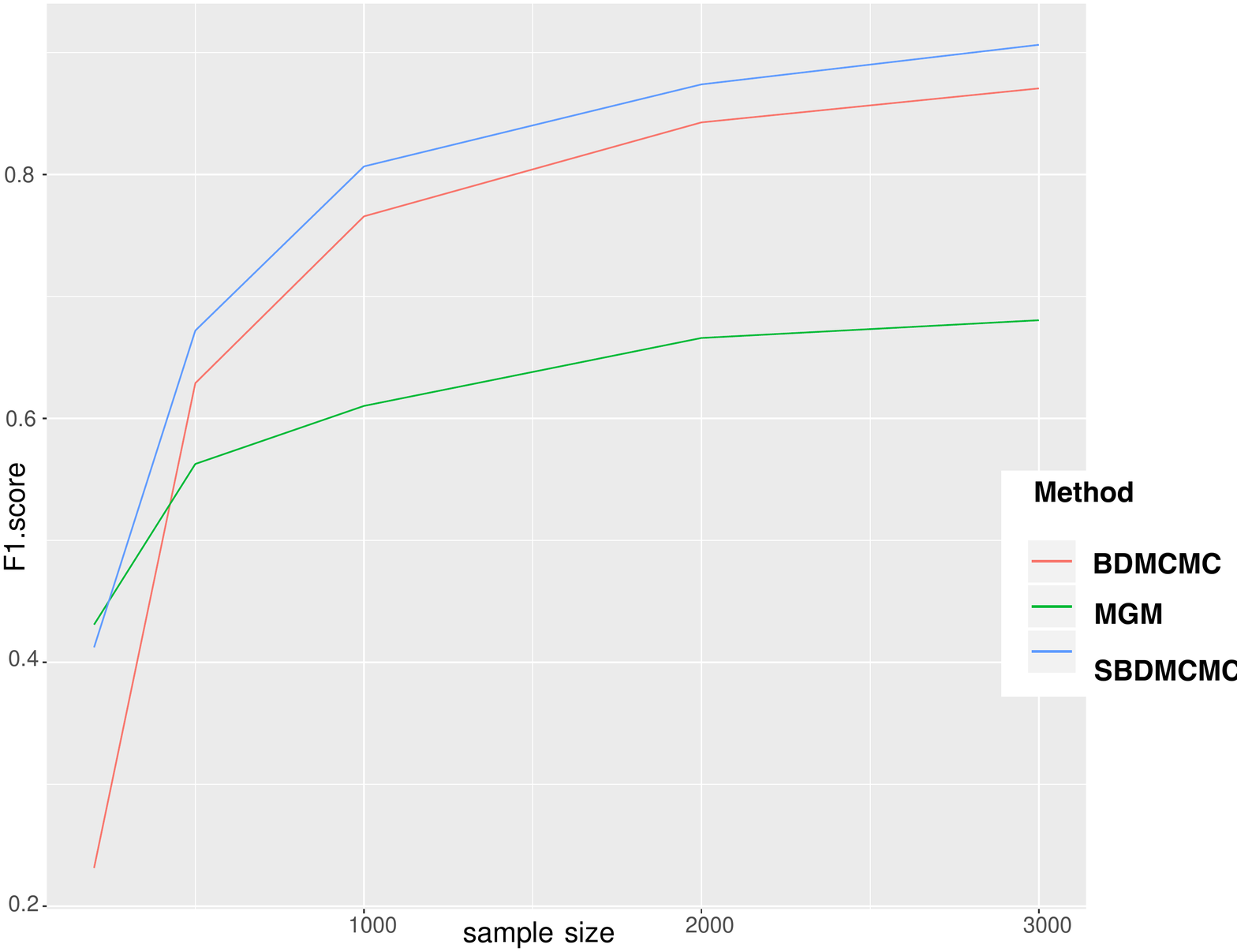} \label{fig:binaryf1}}
\subfloat[Random network ]
{\includegraphics[width=0.5 \linewidth]{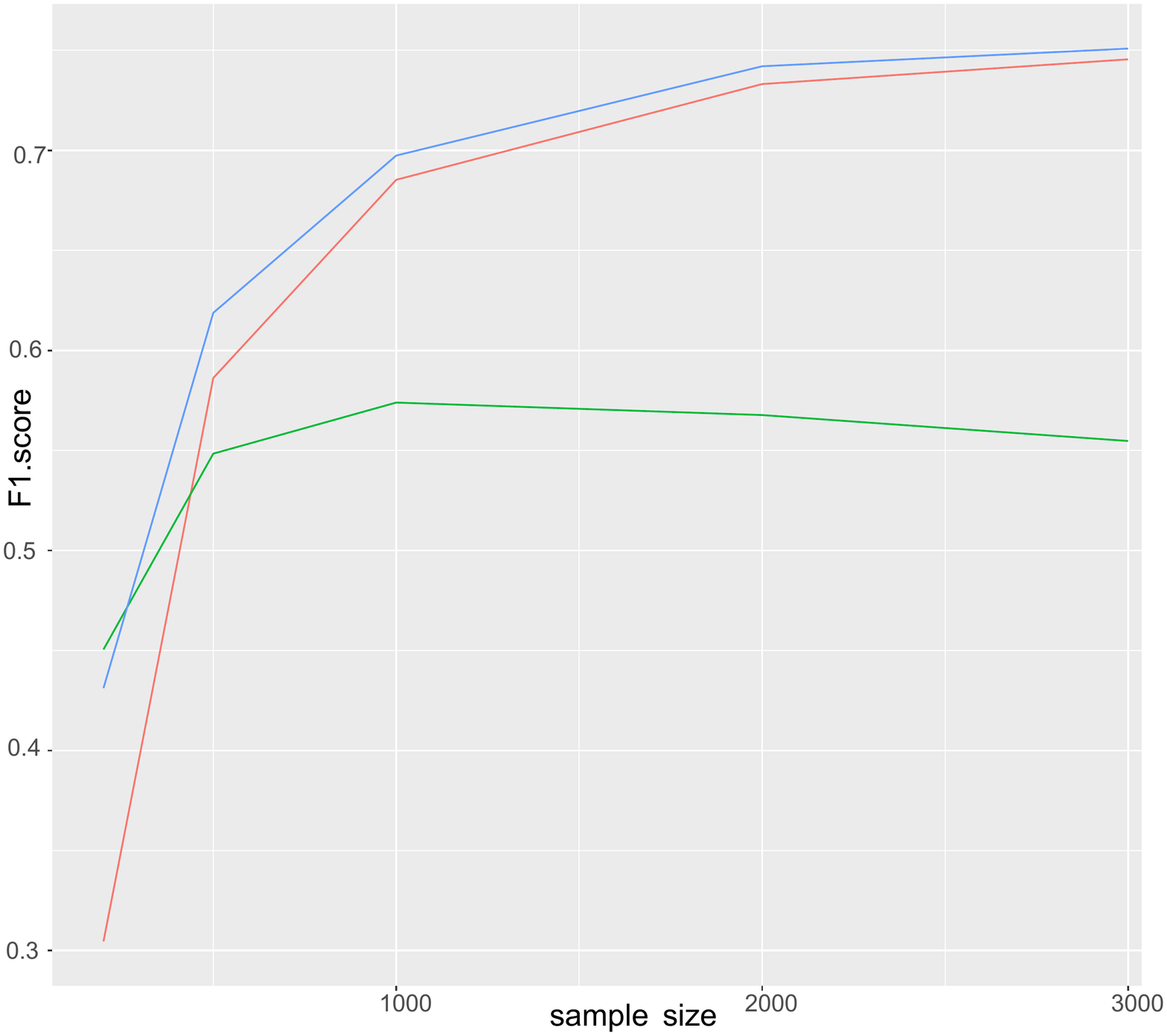} \label{fig:binaryf2}}

\caption{$F_1$ score comparison in discrete graphical models}
\end{figure}

\subsection{Mixed graphical models}
In this section, we simulate two mixed graphical models Markov with respect to the scale-free network and random network as shown in Figure \ref{fig:mix} and Figure \ref{fig:mix2}.

\begin{figure}[H]
\subfloat[Scale free network with 50 variables]{\includegraphics[width=0.5\linewidth]{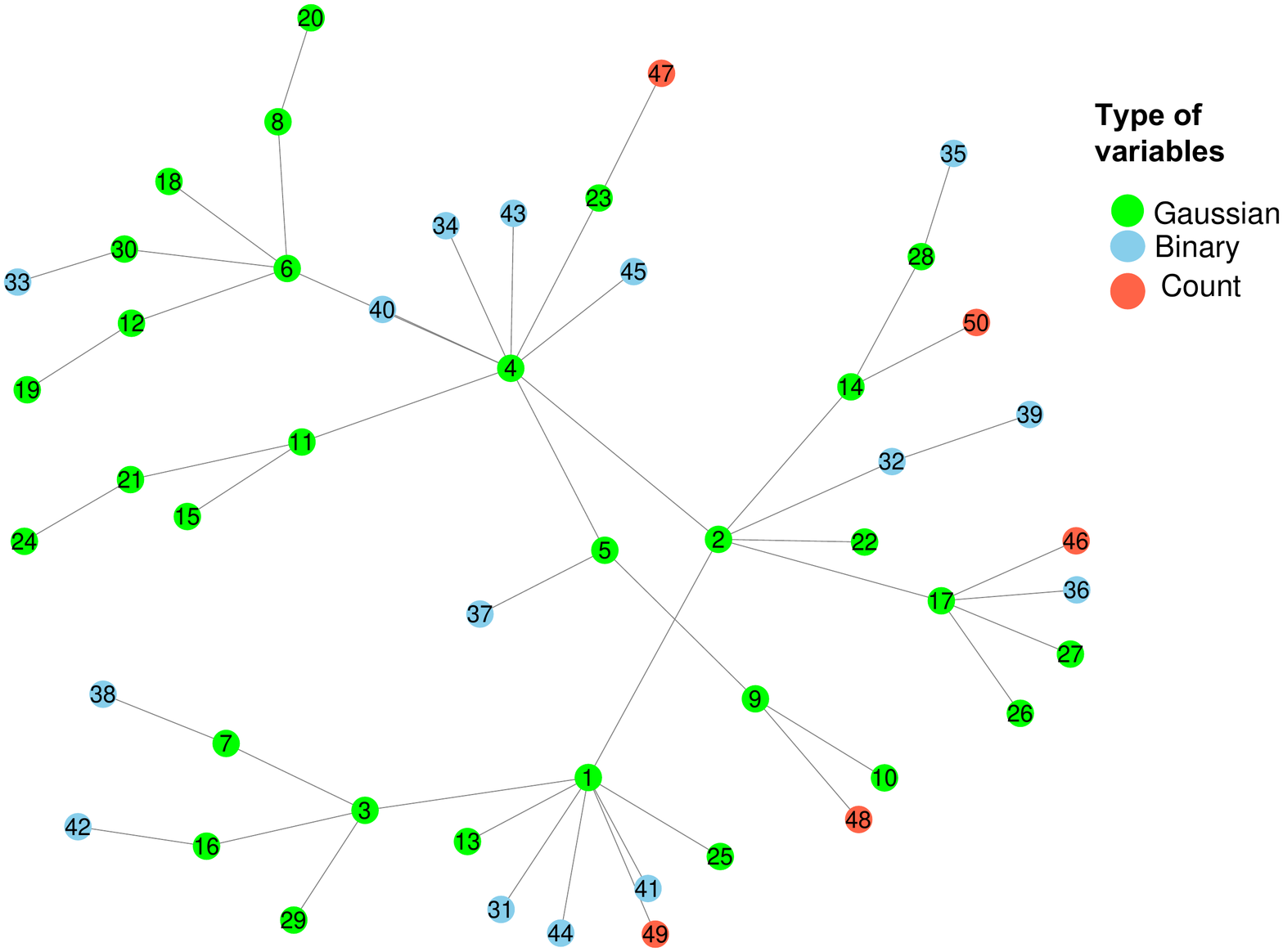}\label{fig:mix}}
\subfloat[Random network with 50 variables]{\includegraphics[width=0.5\linewidth]{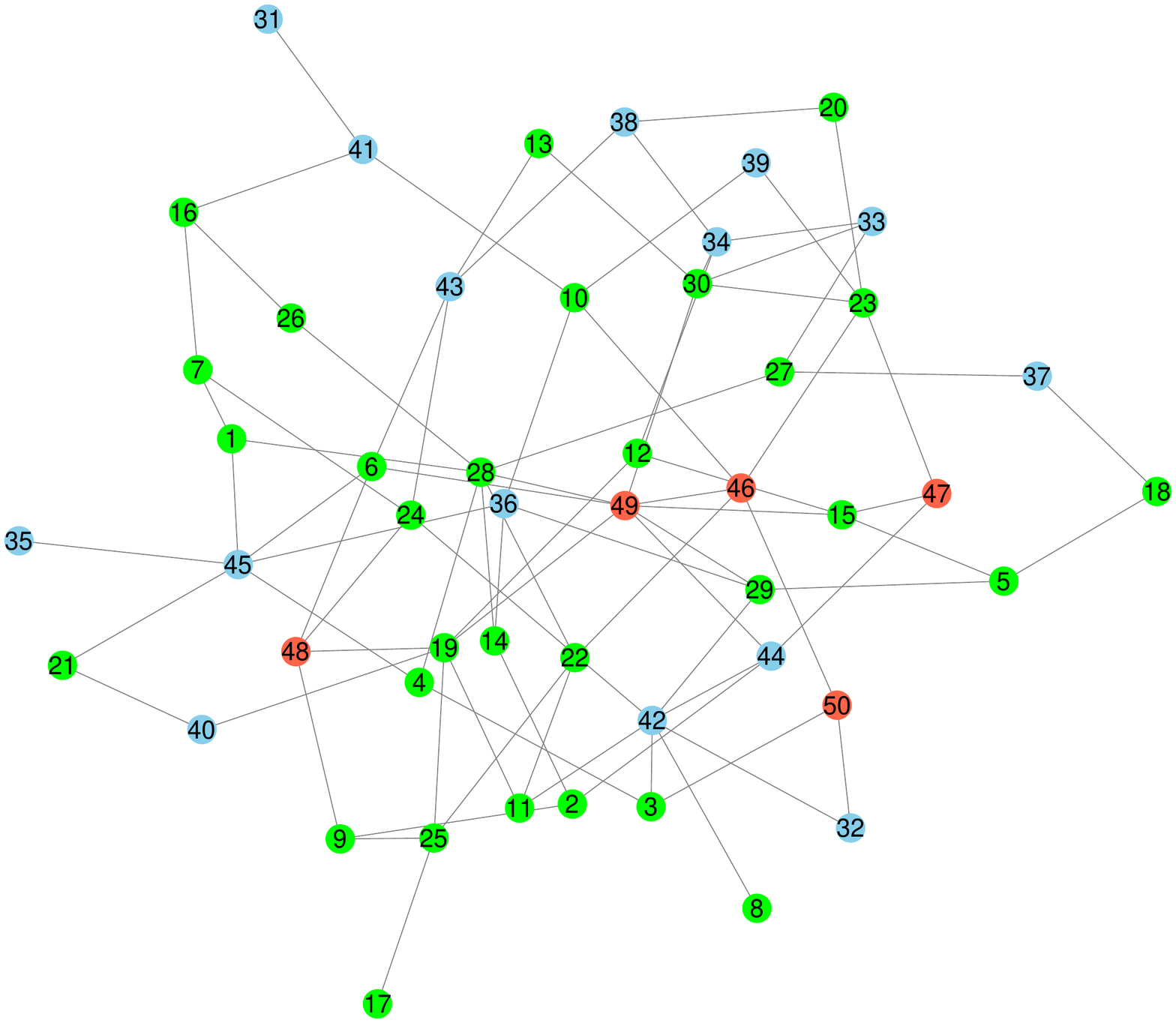}\label{fig:mix2}}
\caption{Graph structures used in the simulation of mixed graphical models }
\end{figure}
The steps for this simulation are as follows:
\begin{enumerate}
\item Generate parameters based on the scale-free or random network graph structure;
\item Use Gibbs sampling to get sample points from the mixed graphical model;
\item Apply the SBDMCMC, BDMCMC, MGM algorithms to learn the graph structure from the given sample and compute the corresponding $F_1$ score.
\end{enumerate}
The $F_1$ score comparison results are shown in Figure \ref{fig:mixf1} and Figure \ref{fig:mixf2}. In mixed graphical models, our SBDMCMC method performs best. 

\begin{figure}[H]
\subfloat[Scale-free network  ]
{\includegraphics[width=0.5 \linewidth]{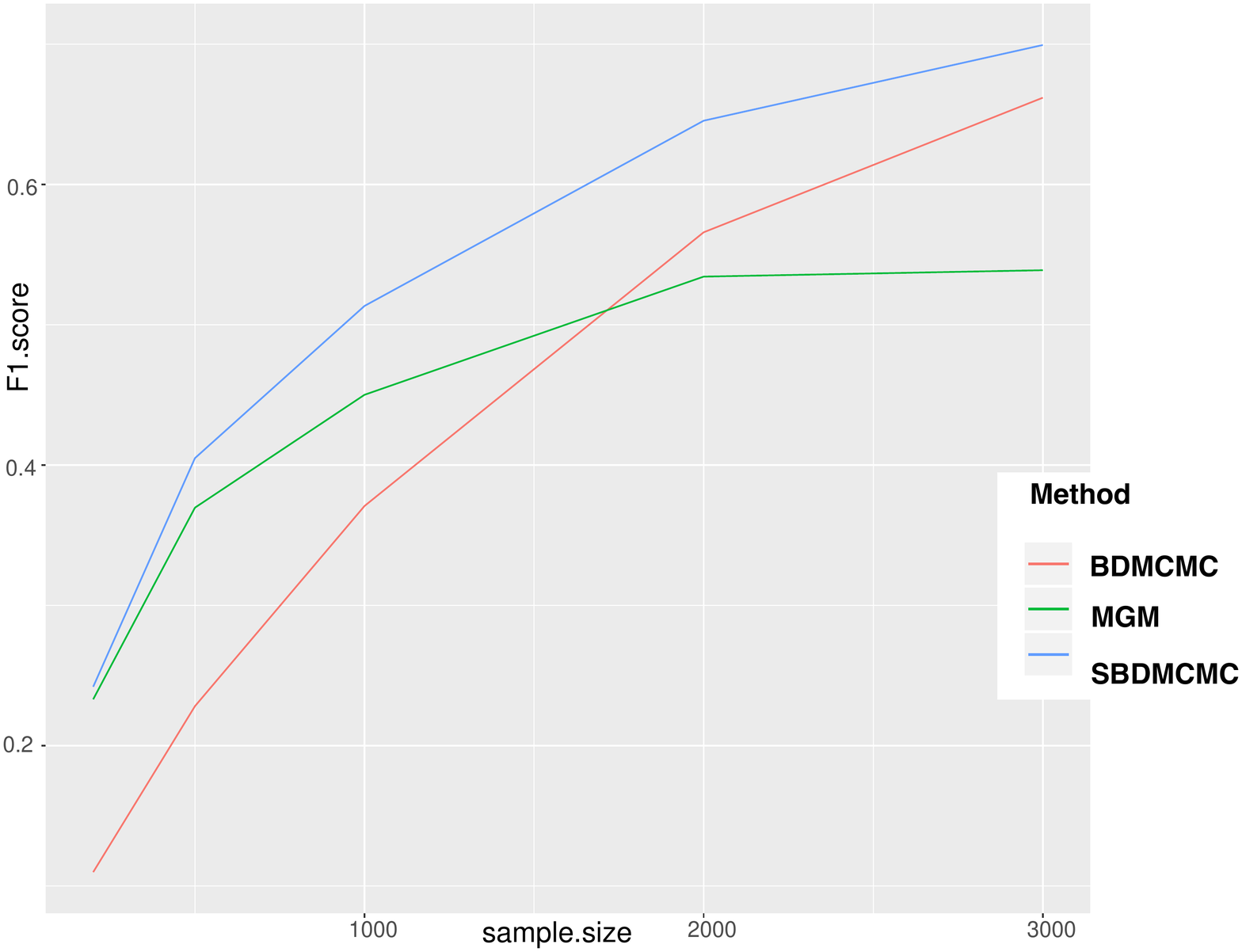} \label{fig:mixf1}}
\subfloat[Random network ]
{\includegraphics[width=0.5 \linewidth]{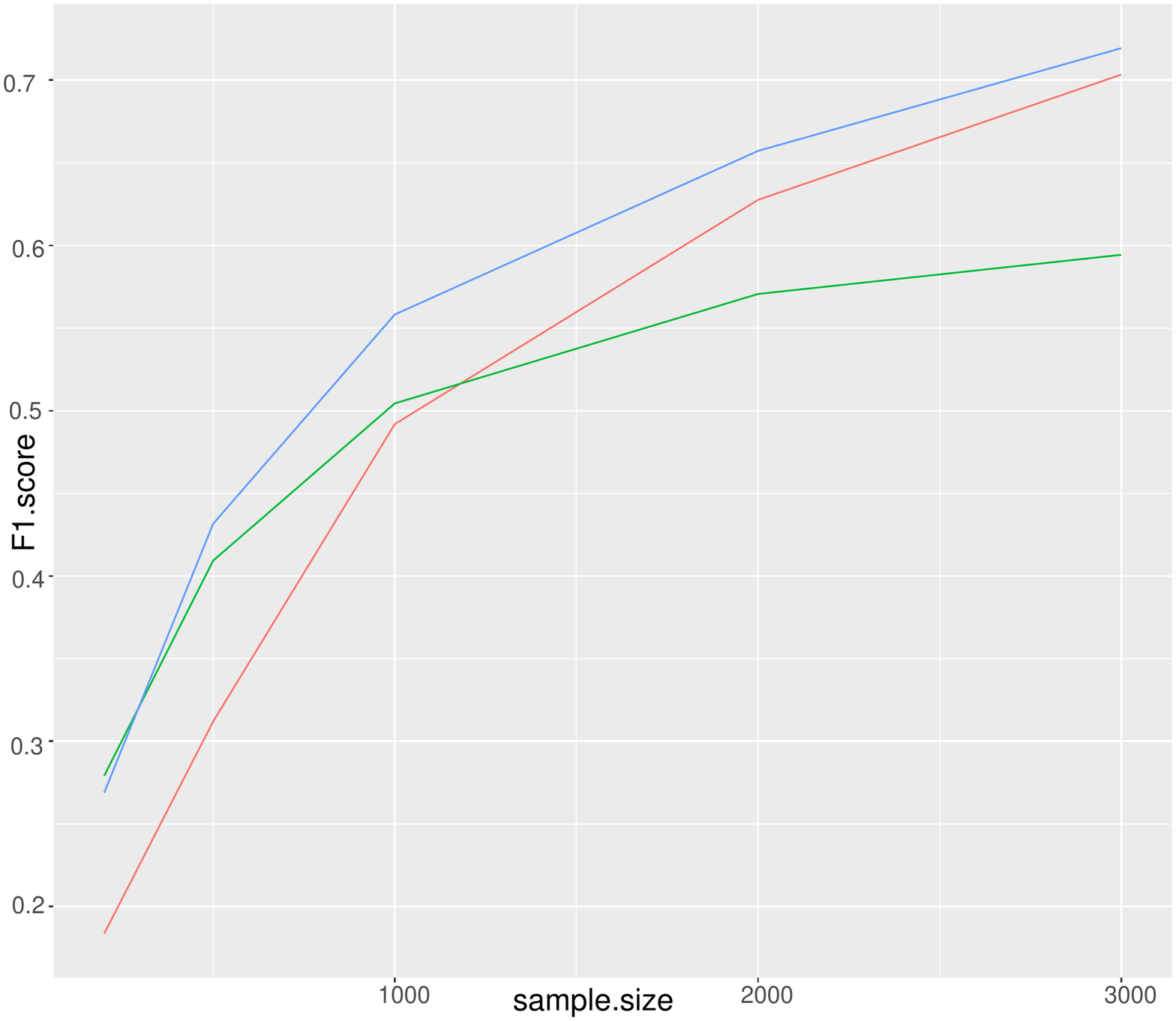} \label{fig:mixf2}}

\caption{$F_1$ score comparison in mixed graphical models}
\end{figure}

\begin{remark}
We comment here on the performance of graphical models to infer the graph structures simulated. The performance of Bayesian model selection methods for graphical models generally depends on two main factors: 1.the approximation accuracy of the marginal likelihood $P(D|G)$; 2. The cardinality of the graph structure space. The original BDMCMC algorithm generates graph structure samples from  the space of entire graphs and this space has cardinality $2^{ \binom{p}{2}}$, where $p$ is the number of variables. Instead, our SBDMCMC algorithm generates neighbourhood structure samples from the neighbourhood structure space of each variable whose cardinality is $2^{p-1}$. Therefore, as $p$ increases, the SBDMCMC algorithm needs to explore a much smaller space than the original BDMCMC algorithm and thus yields better computational efficiency. The approximation accuracy of the marginal likelihood $P(D|G)$ needs to be considered in the different simulation scenarios. In the case of scale free network, the graph structure is close to a decomposable graphical model, so the conditional likelihood estimate used by SBDMCMC is close to the global likelihood estimate. As a consequence, for scale-free networks, the SBDMCMC algorithm outperforms the two competing approaches for model selection. However, in the case of random networks, the graph can be very dense and each local model is not separable from the remaining part of the graph, so the conditional likelihood estimates from the SBDMCMC algorithm do not approximate the global likelihood as well as they do for the scale free networks. Our simulation results thus depend on the tradeoff between computational efficiency and approximation of the marginal likelihood and also on the type of variables that compose the graph.

For instance, with Gaussian graphical models, the BDMCMC algorithm computes the MLE of the parameters of the graphical models globally and is likely more accurate than the conditional likelihood estimate from SBDMCMC. Therefore, the better accuracy of BDMCMC to estimate the marginal likelihood overcomes its cardinality disadvantage compared to SBDMCMC when inferring random networks, as assessed by the $F_1$ score statistic for Gaussian graphical models (see Figure \ref{fig:gaussianf2}).  However, this advantageous tradeoff disappears when inferring Gaussian graphical models with a scale free network graph structure (see Figure \ref{fig:gaussianf1}), where SBDMCMC performs better.
 
 In discrete graphical model scenarios, computing the MLE globally is an NP-hard problem. The modified BDMCMC algorithm in \cite{dobra2018loglinear} uses a pseudo-likelihood to approximate the marginal likelihood $P(D|G)$, so it holds no advantage compared to SBDMCMC to approximate the marginal likelihood and still remains less efficient to investigate a large model space. As a consequence, SBDMCMC performs better than BDMCMC in both scale free networks and random networks for discrete graphical models, see Figure  \ref{fig:binaryf1} and \ref{fig:binaryf2}.
 
 Finally, in the mixed graphical model scenarios, BDMCMC in \citet{dobra2018loglinear} uses Gaussian copula graphical models for datasets with mixed variables. From Figure \ref{fig:mixf1} and \ref{fig:mixf2}, we note that it performs worse than SBDMCMC, and even worse than the  $l_1$-penalized regression method when the sample size is small.
\end{remark}

\subsection{Sensitivity to prior specification}
\label{sec:prior}
In order to assess whether the choice between the three prior distributions defined in Subsection 6.2 affects the results of the SBDMCMC algorithm, we ran several simulations with the mixed graphical models assuming different prior distributions. The setting of this simulation is as follows: first we generate a random graph structure for an undirected graph model with 50 Gaussian variables and 50 Binary variables as shown in Figure \ref{fig:prior_test}; then, we randomly generate  parameter values from a standard normal distribution and generate samples from the mixed graphical model distribution with sample sizes of 100 and 500; Finally, we run our SBDMCMC algorithm and generate ROC plots as displayed in Figure \ref{fig:roc}.

\begin{figure}[H]
\includegraphics[scale=0.4]{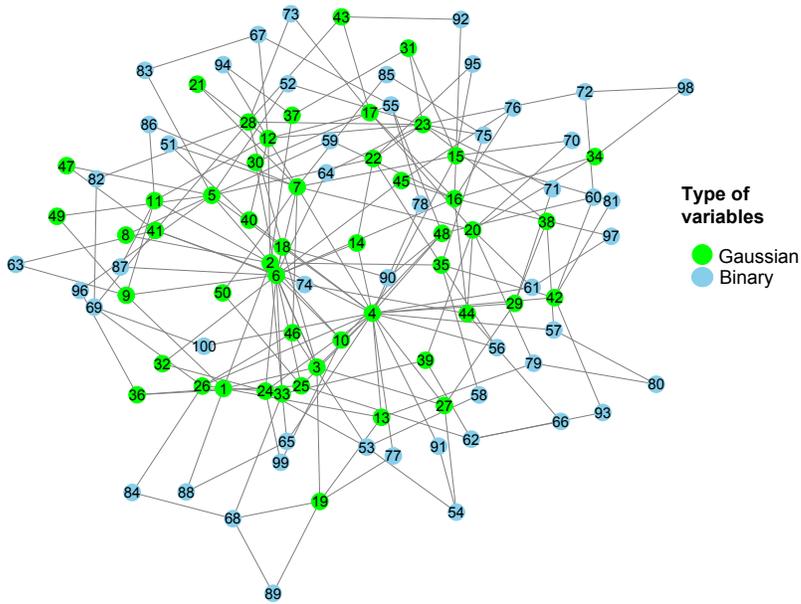}
\caption{Mixed graphical model simulated for testing different priors}
\label{fig:prior_test}
\end{figure} 

\begin{figure}[H]
\subfloat[DM prior with sample size  100]{\includegraphics[width=0.5\linewidth]{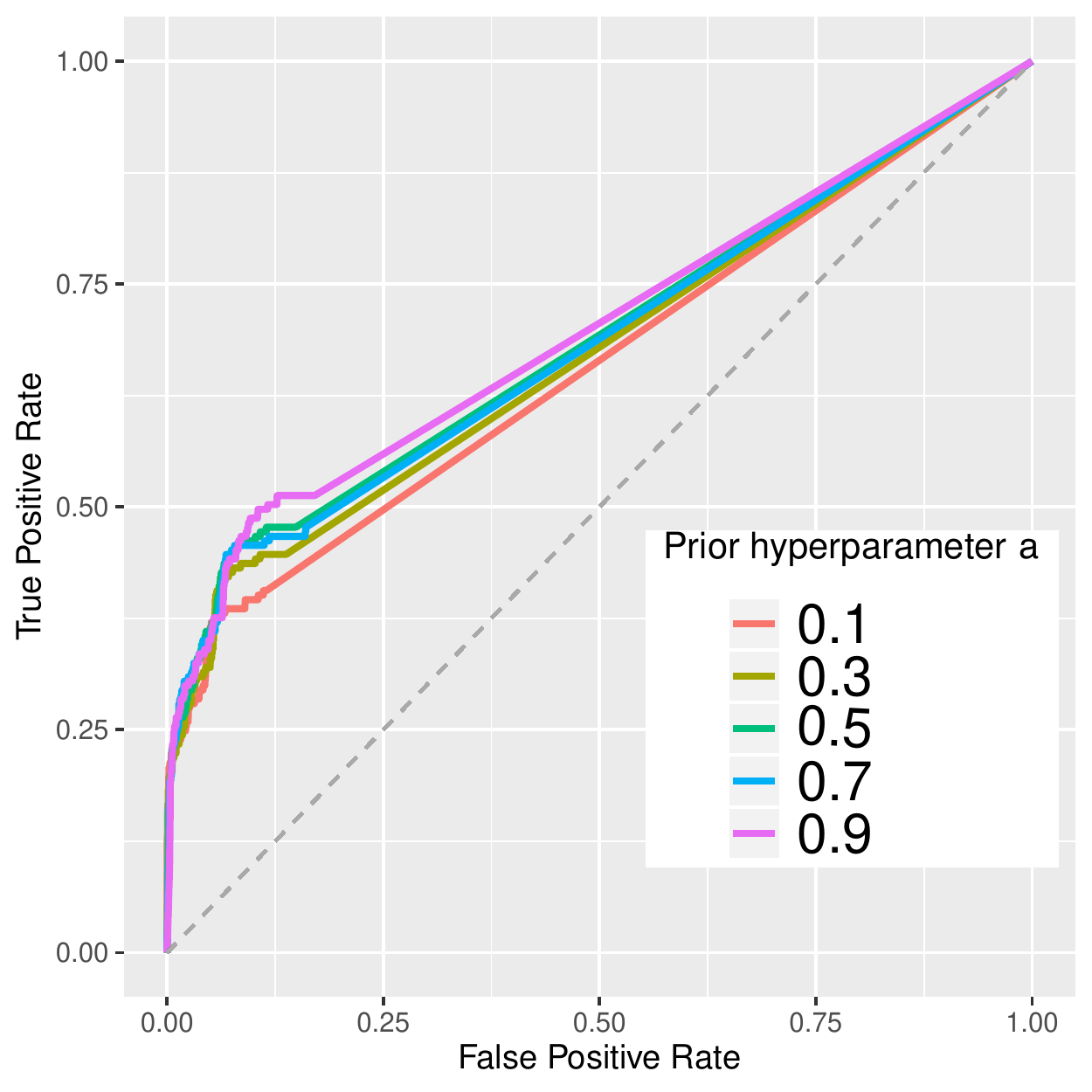}\label{fig:prior01}}
\subfloat[DM prior with with sample size  500]{\includegraphics[width=0.5\linewidth]{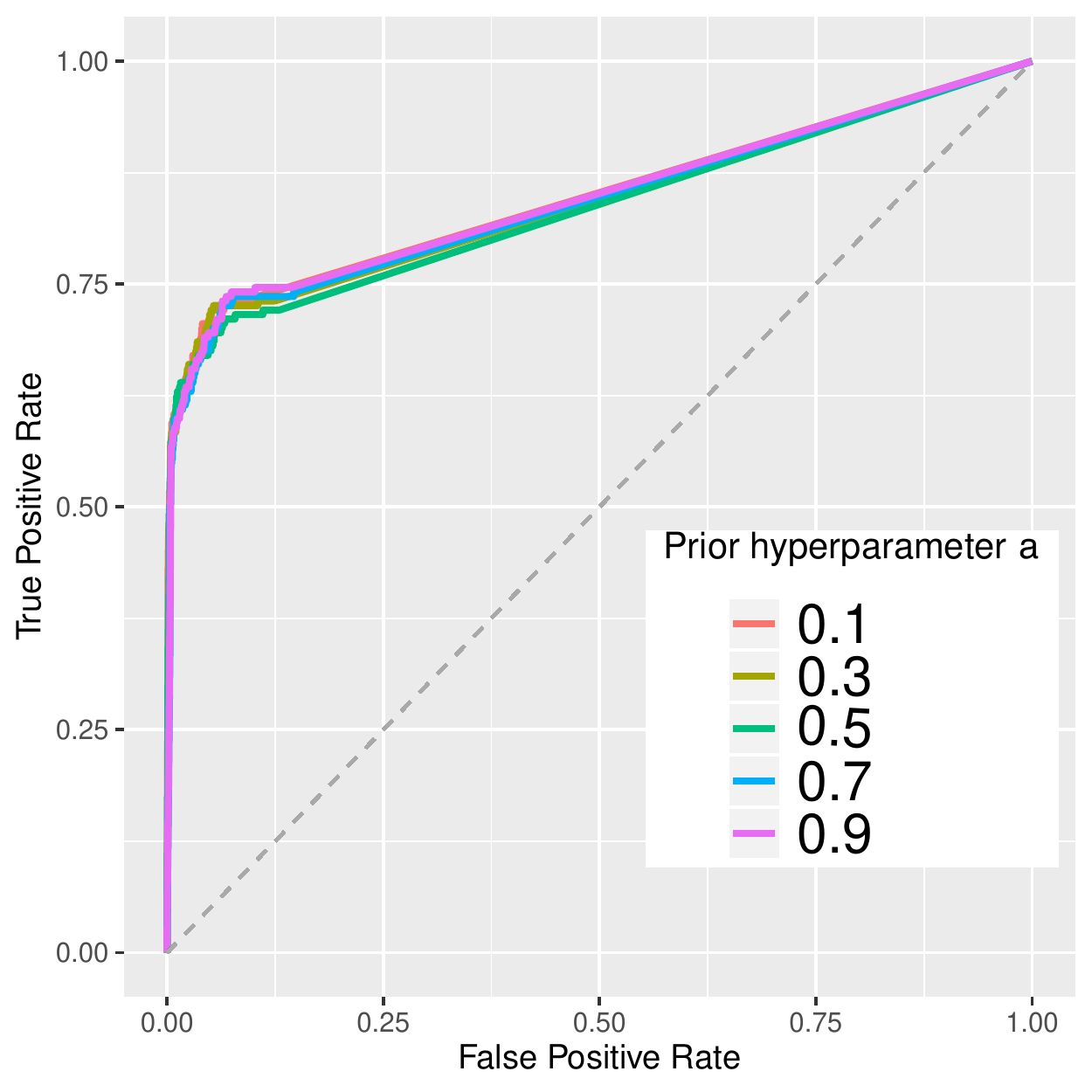}
\label{fig:prior05}}

\subfloat[NY prior with sample size  100]
{\includegraphics[width=0.5\linewidth]{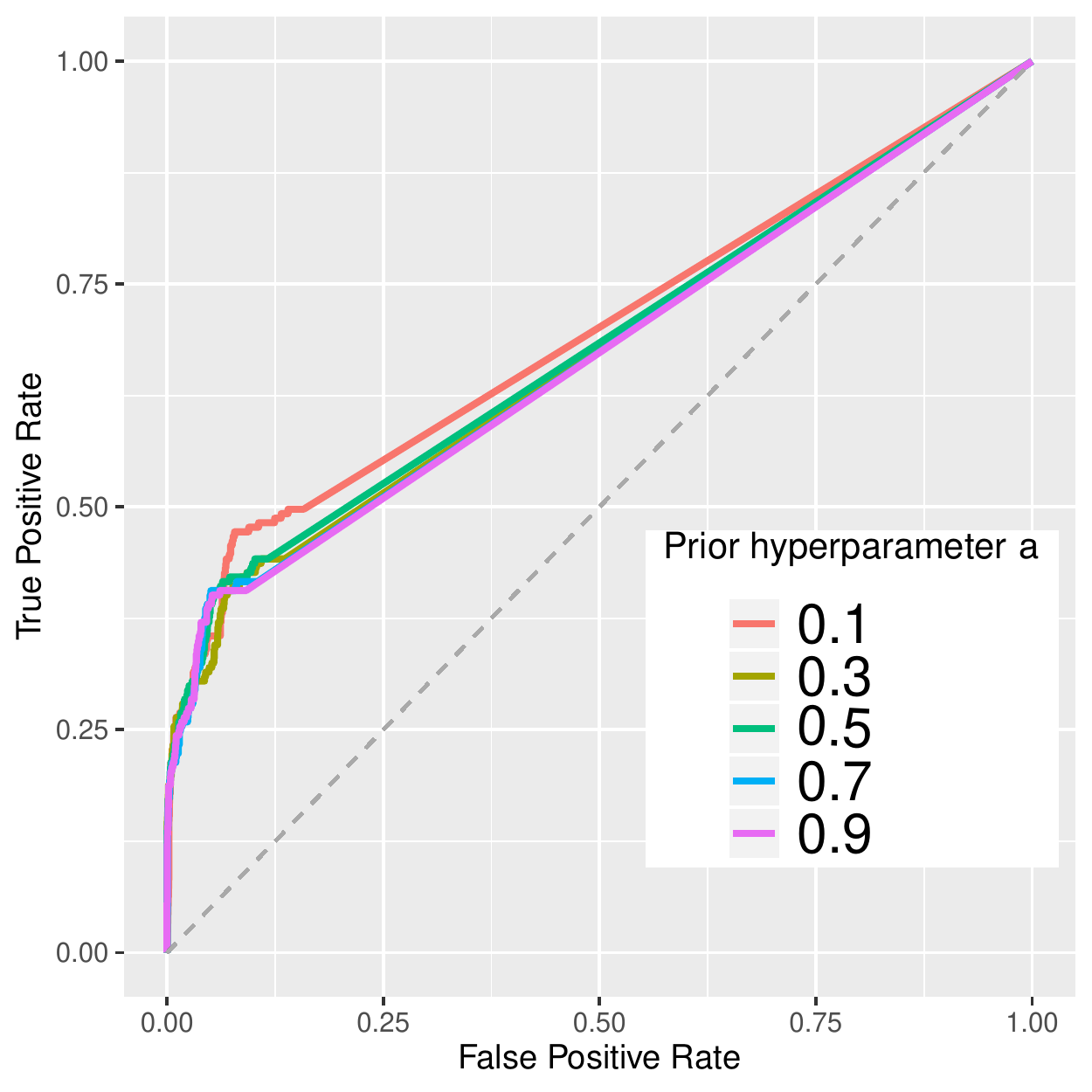}\label{fig:prior11}} 
\subfloat[NY prior with sample size  500]
{\includegraphics[width=0.5\linewidth]{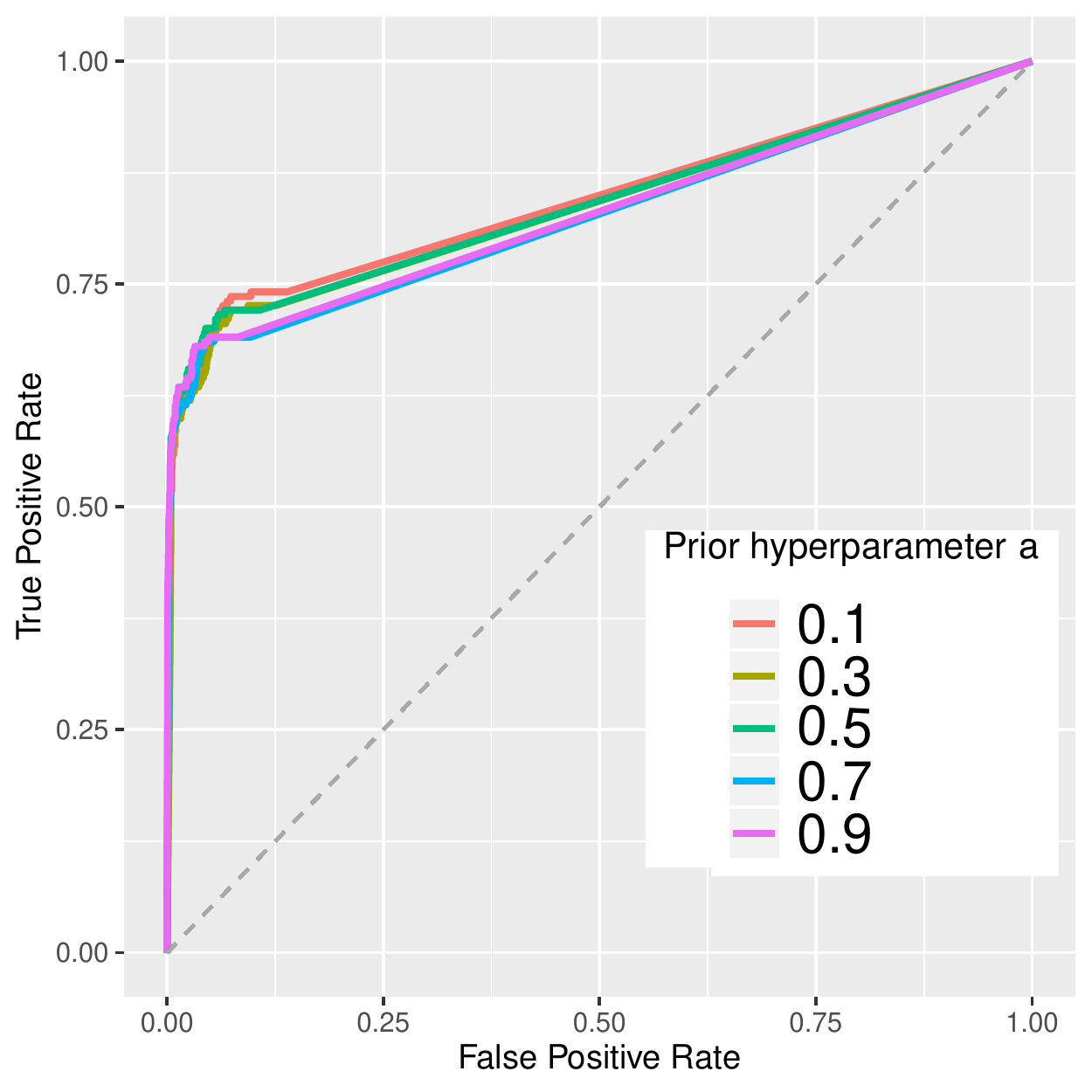}\label{fig:prior15}} 
\end{figure}

\begin{figure}[H]

\subfloat[SB prior with sample size  100]
{\includegraphics[width=0.5\linewidth]{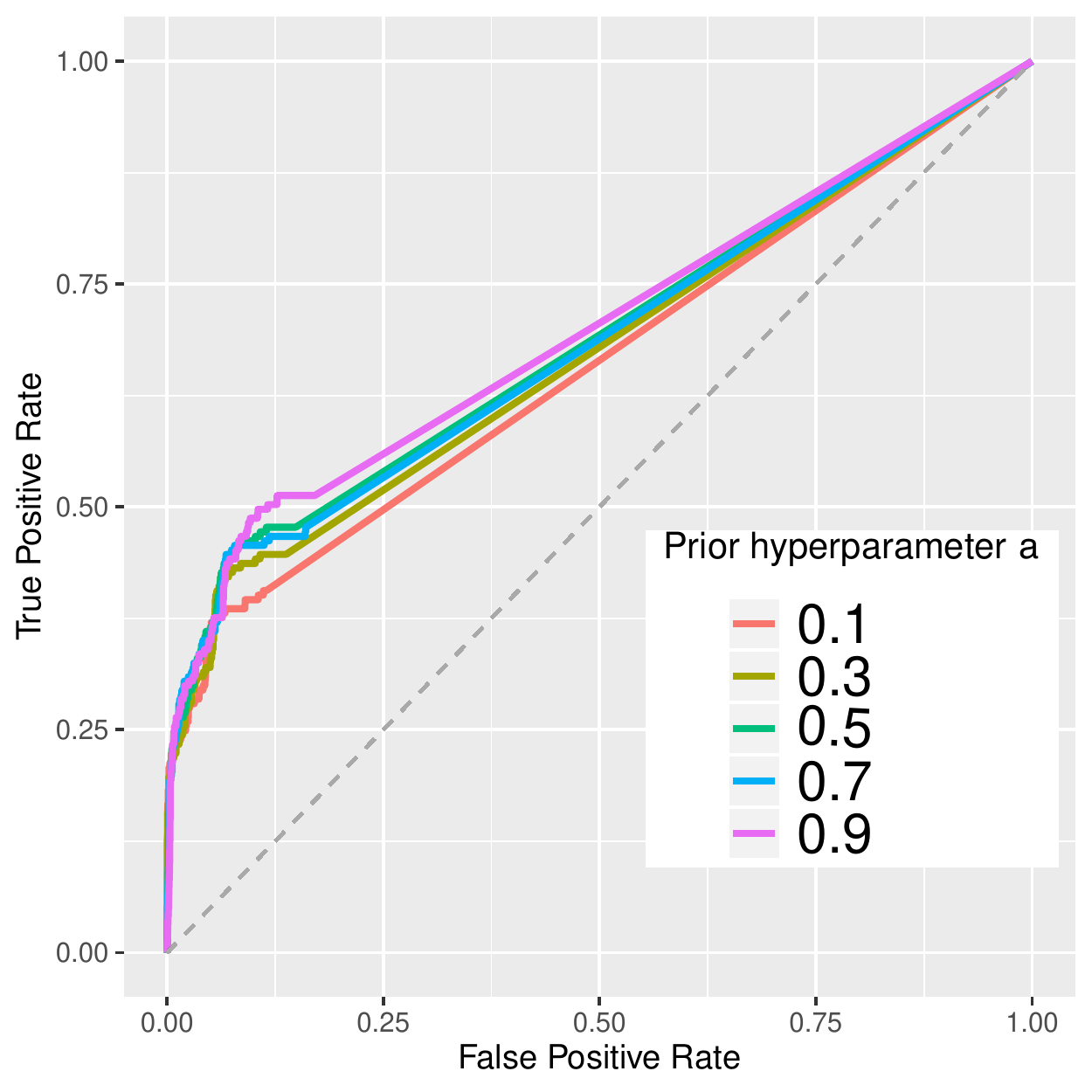}\label{fig:prior21}}
\subfloat[SB prior with sample size  500]
{\includegraphics[width=0.5\linewidth]{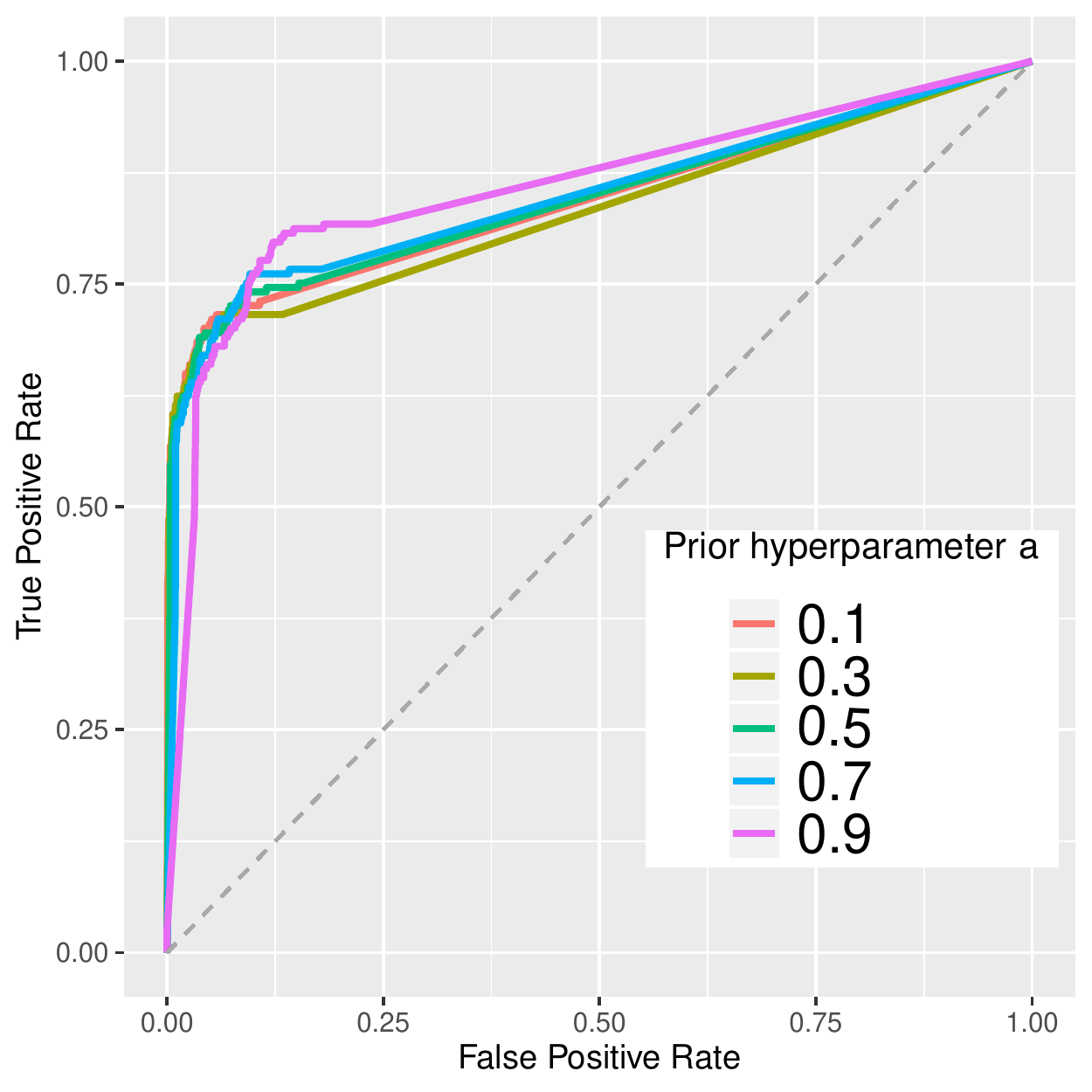}\label{fig:prior25}}
\caption{ROC plots for mixed graphical models with 50 Gaussian variables and 50 binary variables,  with different prior settings and  using the  SBDMCMC algorithm }
\label{fig:roc}
\end{figure}
From these simulation results, we note that the choice of the prior does not affect much the performance of the SBDMCMC algorithm. In all our simulation results and applications below, we use the DM prior with $a=0.5$. 
%Combining the Bayes factor $BF(M_1,M_0)$ and the model prior $p(M)$, we get the results on all the change rates.\\

%\section{The consistency of the Birth-Death MCMC process}
%Since the cardinality of model space $\M$ is too large in high-dimensional regression problems, we decide to use bayesian model averaging instead of selecting a best model to figure out which variables should be included in the finial model:
%\[
 %p(x_v\in \M)=\sum_{i} \mathbf{1}(x_v \in M_i)p(M_i|data).
%\]

%Therefore, if the the posterior distribution $p(M_i|data)$ is consistent, i.e. $p(M=M^{*}|data)\to_p 1$, then the posterior distribution of the selected variable is also consistent:
%\[
%\begin{array}{lcl}
 %p(x_v\in M^{*}) &\to_p& 1, x_{v}\in M^* \\
  %p(x_v\not \in M^{*}) &\to_p& 0, x_{v}\not \in M^*

%\end{array}\]

%In this paper, we use the BIC value to approximate the posterior distribution $p(M_i|data)$, which has been proved to be consistent, too.

\section{Real data analysis}
\label{sec:real}
\subsection{Breast cancer}
Breast cancer is one of the most common cancers with greater than
1,300,000 cases and 450,000 deaths each year worldwide. 
Breast cancer is a complex disease that is caused by multiple genetic and
environmental factors. Molecular heterogeneity in breast
tumors has challenged diagnosis, prognosis, and clinical
treatment. Human breast cancer presents both intra- and inter-tumor
molecular heterogeneity. Intertumor variation is manifested by
molecular subtypes that represent significant differences in
prognosis and survival \citep{Parker2009}. Evidence from
gene expression profiling and unsupervised clustering analysis
has indicated five major subtypes for breast carcinomas: 
ER-positive/HER2-negative (luminal A and luminal
B subtypes); ER-negative/HER2-negative (basal subtype); HER2-
positive; and carcinomas that have features similar to normal
breast tissue \citep{Perou1999, Sorlie2001}. 
A set of 50 genes (PAM50) has
been proposed to classify breast tumor samples into the subtypes \citep{Parker2009}. Six (ESR1, PGR, FOXA1, FOXC1,
MYC, and MYBL2) of the 50 genes are transcriptional factor
genes \citep{Parker2009}. Recent studies on genomics and
transcriptomics of large patient populations have identified
additional subtypes \citep{Curtis2012, Guedj2012}. In
the present study, we will focus on the five major subtypes.

\subsection{TCGA data} 
The Cancer Genome Atlas (TCGA): The TCGA consortium, which is a National Institute of Health (NIH)
initiative, makes publicly available molecular and clinical information for more than 30 types of human
cancers including exome (variant analysis), single nucleotide polymorphism (SNP), DNA methylation,
transcriptome (mRNA), microRNA (miRNA) and proteome. Sample types available at TCGA are: primary
solid tumors, recurrent solid tumors, metastatic tumors, blood-derived normal and tumor samples, and normal solid tissues.
TCGA data is accessible via the the NCI Genomic Data Commons (GDC) data portal, GDC Legacy Archive and the
Broad Institutes GDAC Firehose. The GDC Data Portal provides access to the subset of TCGA data that has been
harmonized against GRCh38 (hg38) using GDC Bioinformatics Pipelines which provides methods to the standardization
of biospecimen and clinical data, the re-alignment of DNA and RNA sequence data against a common reference
genome build GRCh38, and the generation of derived data. We used the $R$ package $TCGAbiolinks$ \citep{Colaprico2016} to download the TCGA data, which is freely available through the Bioconductor repository. Somatic mutation information was available on 4864 known cancer genes and 985 breast cancer patients. For each gene, a binary variable was created by indicating whether an insertion or deletion was found (variable coded as 1) or not (variable coded as 0). Low quality and potential germline variants have been removed from this file. Gene expression data were available on 1095 breast cancer patients and 19672 unique genes. We only used expression data from the Illumina HiSeq 2000 RNA Sequencing Version 2 analysis platform corresponding to primary solid tumors. 

\subsection{Aims of this study} 
Certain gene signatures were devised to identify
breast cancer molecular subtypes, which may help in accessing
prognosis. PAM50 is an example of such tests. Of note, multi-gene tests and molecular
subtype classification do not inform us about the mutations and
epigenetic events that have impact in cancer progression. Some
authors argue in favor of assays that are based on the combination
of mutation profiling with the gene expression analysis 
because the presence of specific driver genetic aberrations can
predict the response to specific targeted therapies \citep{Turner2013, Weigelt2011}. Thus,
one important approach is to assess the complete spectrum
of cancer mutations and find the specific actionable molecules
that are crucial in order to perform tailored therapy. Therefore our goals here are first to validate some of the major gene mutations and gene expressions associated with breast cancer subtypes and second, to assess whether an enriched PAM50 expression data can help refining the definition of the BC subtypes.

%Methylation data were available on 785 breast cancer patients and we only used CpG sites corresponding to the 50 genes defining the PAM50 subtypes. We only downloaded DNA methylation data from the Infinium HumanMethylation450 platform. Finally, the subtype information was available on 783 breast cancer patients including 418 luminal A, 141 luminal B, 135 basal, 46 HER2 and 34 normal samples. The five different subtypes were coded as five binary variables. 

\subsubsection{PAM50 expression data and the BC subtypes} 
{\bf Analysis:} We used the logarithm of all the gene expressions to get Gaussian distributions for these variables. We constructed mixed graphical models based on Gaussian variables comprised of the 50 gene (PAM50) expression variables and the 5 binary variables which are the  breast cancer subtypes. This analysis included 772 breast cancer women with known cancer subtypes decomposed into 416 luminal A, 141 luminal B, 135 basal, 46 HER2 and 34 normal-like samples. The results are shown in Figure \ref{fig:es1} for the analysis integrating neighbourhoods based on the AND rule and in Figure \ref{fig:es2} on the OR rule. In these two networks, we removed the edges among gene expressions to better visualize the connections between BC subtypes and the PAM50 gene expressions. \\
{\bf Results:} For the graph with the AND rule (Figure \ref{fig:es1}), we first notice that a few genes from the PAM50 list connect two distinct subtypes, including ERRB2, ESR1, FOXC1, NAT1 and SFRP1. This was expected for  ERRB2 and ESR1 since these genes determine directly the BC subtypes (the estrogen receptor is a protein coded by the ESR1 gene and the HER2 protein by the ERBB2 gene). Besides, FOXC1 is known to be a key prognostic indicator of basal-like breast cancer (Jin et al. 2015), NAT1 has been associated with ER+ BCs and the luminal type \citep{Perou1999, Sorlie2001}, and SFRP1 has been shown to be associated with the basal and luminal BC types (Huth et al.  2014). A simple Pearson correlation matrix (Figure \ref{Rplot1}) indicates that these genes have opposite correlation signs with the 2 subtypes they are connected to, except SFPR1 (negatively correlated with lumA and lumB). Some of the edges we discovered confirm previous knowledge about the BC subtypes. For instance, the edge CDC20-basal: CDC20 has recently been identified as the top hit of genes over-expressed in patients with basal-type BC (Sharma et al. 2018); The edge lumA-BCL2: BCL2 expression has been found to be a good prognostic marker for only luminal A breast cancer (Eom et al., 2016); Other edges with lumA (MLPH, MMP11, TYMS. CENPF, NAT1, SFRP1) have also some level of evidence in the literature either specifically with the Lumina A subtype or more generally with ER+ subtypes. The graph with the OR rule (Figure \ref{fig:es2}) confirms that the PAM50 set of genes contribute to the definition of the BC subtypes since they are all connected to more than one subtype.  

%\begin{figure}[H]
%\subfloat[mixed network based on AND rule  ]
%{\includegraphics[width=0.5 \linewidth]{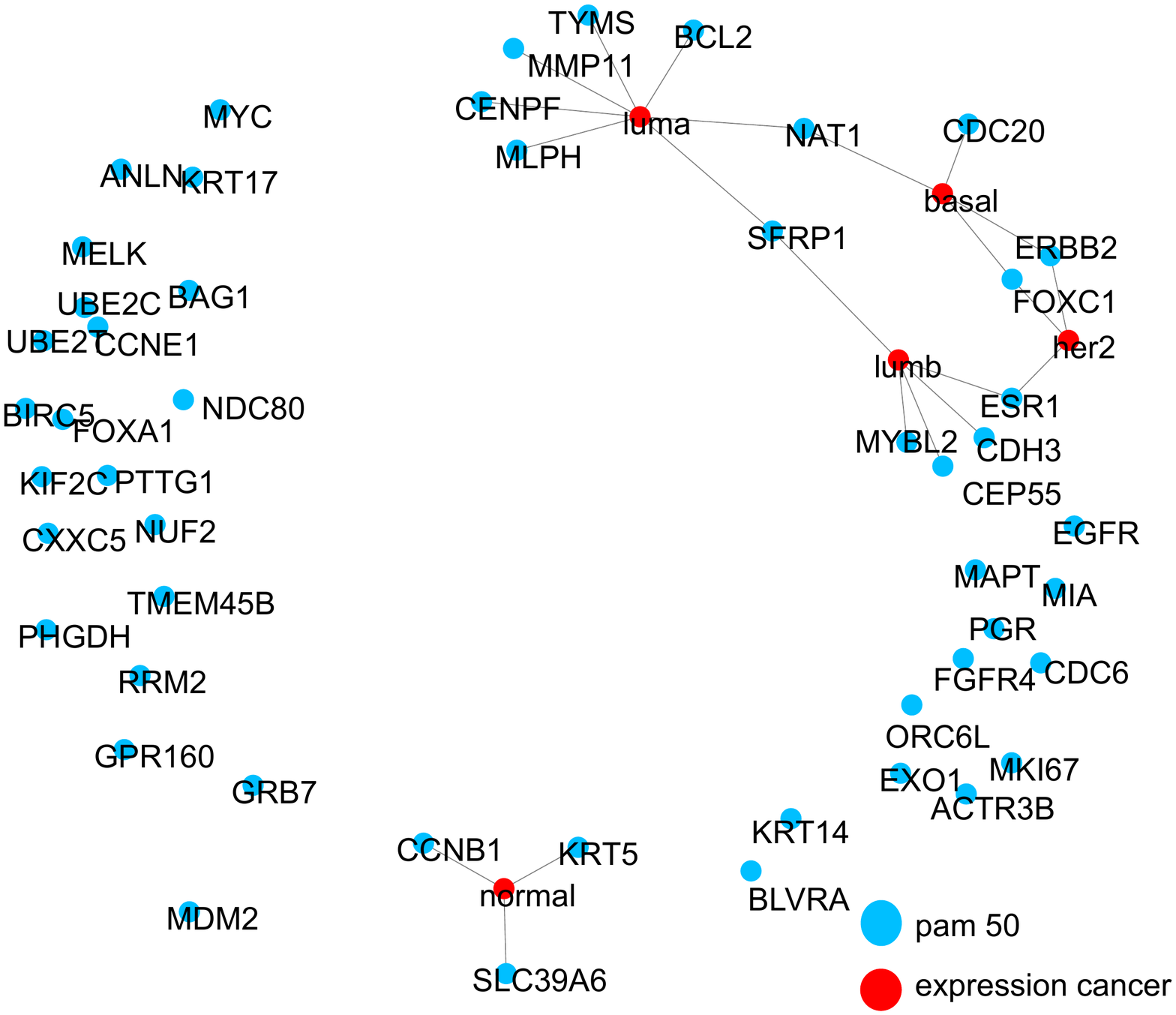} \label{fig:es1}}
%\subfloat[mixed network based on OR rule  ]
%{\includegraphics[width=0.5 \linewidth]{} \label{fig:es2}}
%\caption{gene expression and BC subtypes mixed network}
%\end{figure}

\begin{figure}[H]
\subfloat[Mixed network based on the AND rule  ]
{\includegraphics[width=0.5 \linewidth]{expression_subtype1} \label{fig:es1}}
\subfloat[Pearson correlation matrix between BC subtypes and selected gene expressions ]
{\includegraphics[width=0.5 \linewidth]{} \label{Rplot1}}
\caption{Gene expressions and BC subtypes }
\end{figure}

%\begin{figure}[H]
%\centering
%\includegraphics[scale=0.4]{expression_subtype1}
%\caption{Gene expression and BC subtypes mixed network based on AND rule.}
 %\label{fig:es1}
%\end{figure}

%\begin{figure}[H]
%\centering
%\includegraphics[scale=0.4]{Rplot}
%\caption{Pearson correlation matrix between BC subtypes and the set of gene expressions (ERRB2, ESR1, FOXC1, NAT1 and SFRP1).}
%\label{Rplot1}
%\end{figure}

\begin{figure}[H]
\centering
\includegraphics[scale=0.4]{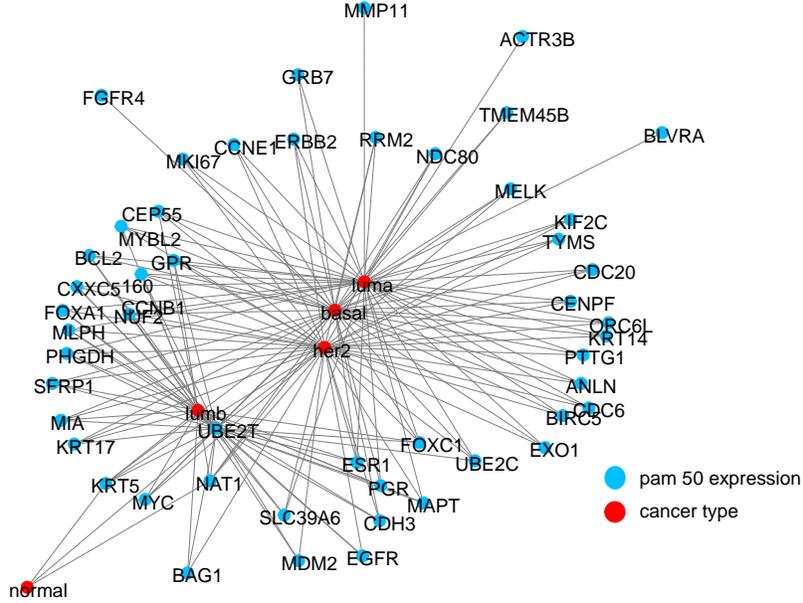}
\caption{Mixed network of gene expressions and BC subtypes based on the OR rule}
\label{fig:es2}
\end{figure}

\subsubsection{Mutation data and the BC subtypes}
{\bf Analysis:} We constructed binary Ising graphical models using the binary variables only, consisting of 46 mutation variables whose variances were greater than 0.03 and the 5 breast cancer subtypes. This analysis included 681 breast cancer women with known subtypes decomposed into 357 luminal A, 135 luminal B, 119 basal, 43 HER2 and 27 normal-like samples. 
{\bf Results:} As depicted in Figures \ref{fig:ms1} and \ref{fig:ms2}, we found similar patterns as for the gene expression and BC subtype network analyses: the normal subtype is disconnected from other BC subtypes. The mixed networks of BC subtype and gene mutation variables based on either the OR rule or the AND rule are very sparse, so here we keep all the connections we found. From the two networks, we can find gene mutations that characterize the four BC subtypes: luminal A, luminal B, basal, and HER2. This includes mutations in the genes CDH1, GATA3, CDH1, PIK3CA, MAP3K1, and TP53. The high prevalence of mutations of TP53 in these four subtypes, of PIK3CA in the luminal A and basal subtypes, of MAP3K1 in luminal A subtype, of CDH1 in the luminal A subtype have been already well described in the recent literature \citet{TCGA2012}. Finally, the normal-like BC subtype does not seem to be determined by any of this panel of 46 mutations.

\begin{figure}[H]
\subfloat[Mixed network based on the AND rule  ]
{\includegraphics[width=0.5 \linewidth]{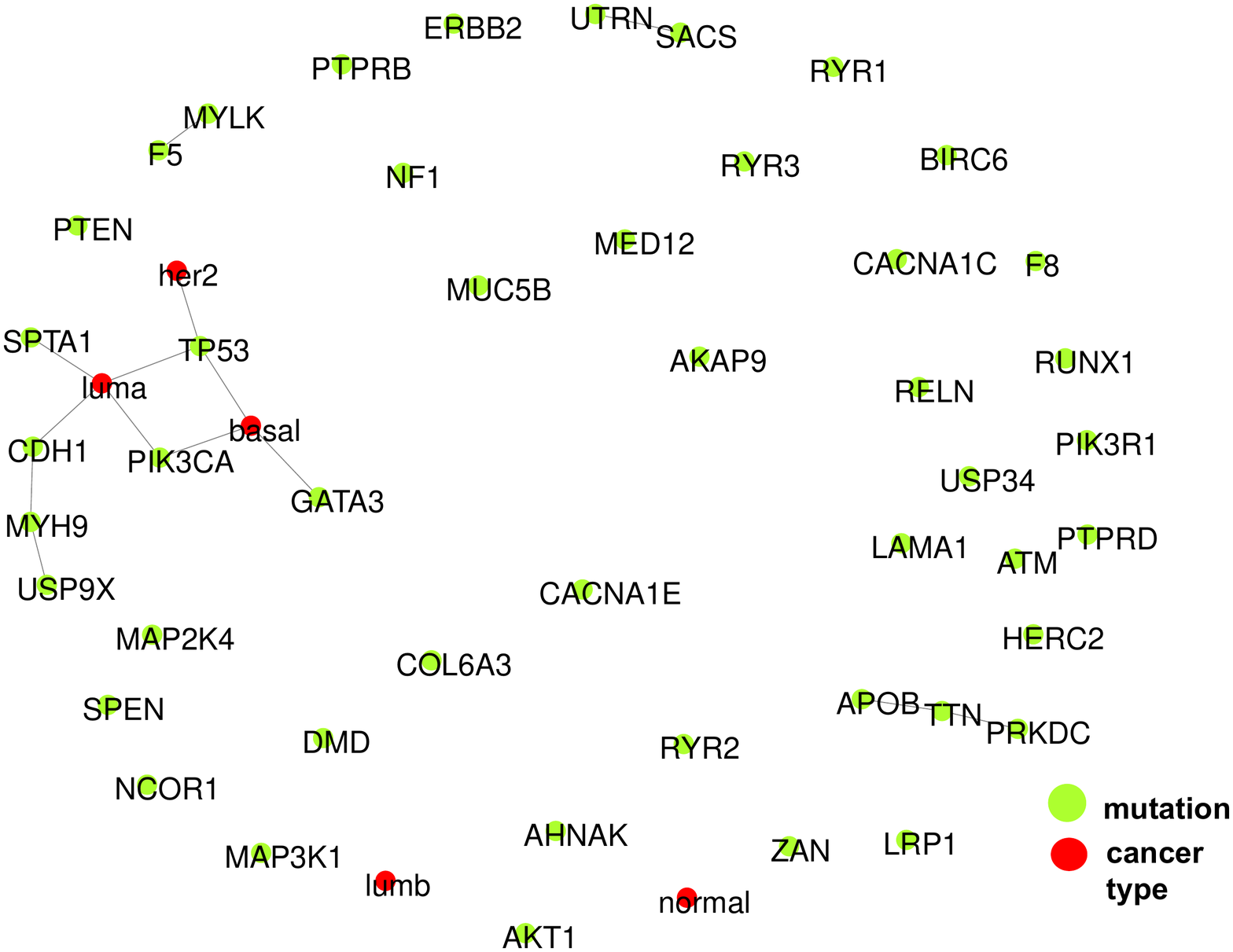} \label{fig:ms1}}
\subfloat[Mixed network based on the OR rule  ]
{\includegraphics[width=0.5 \linewidth]{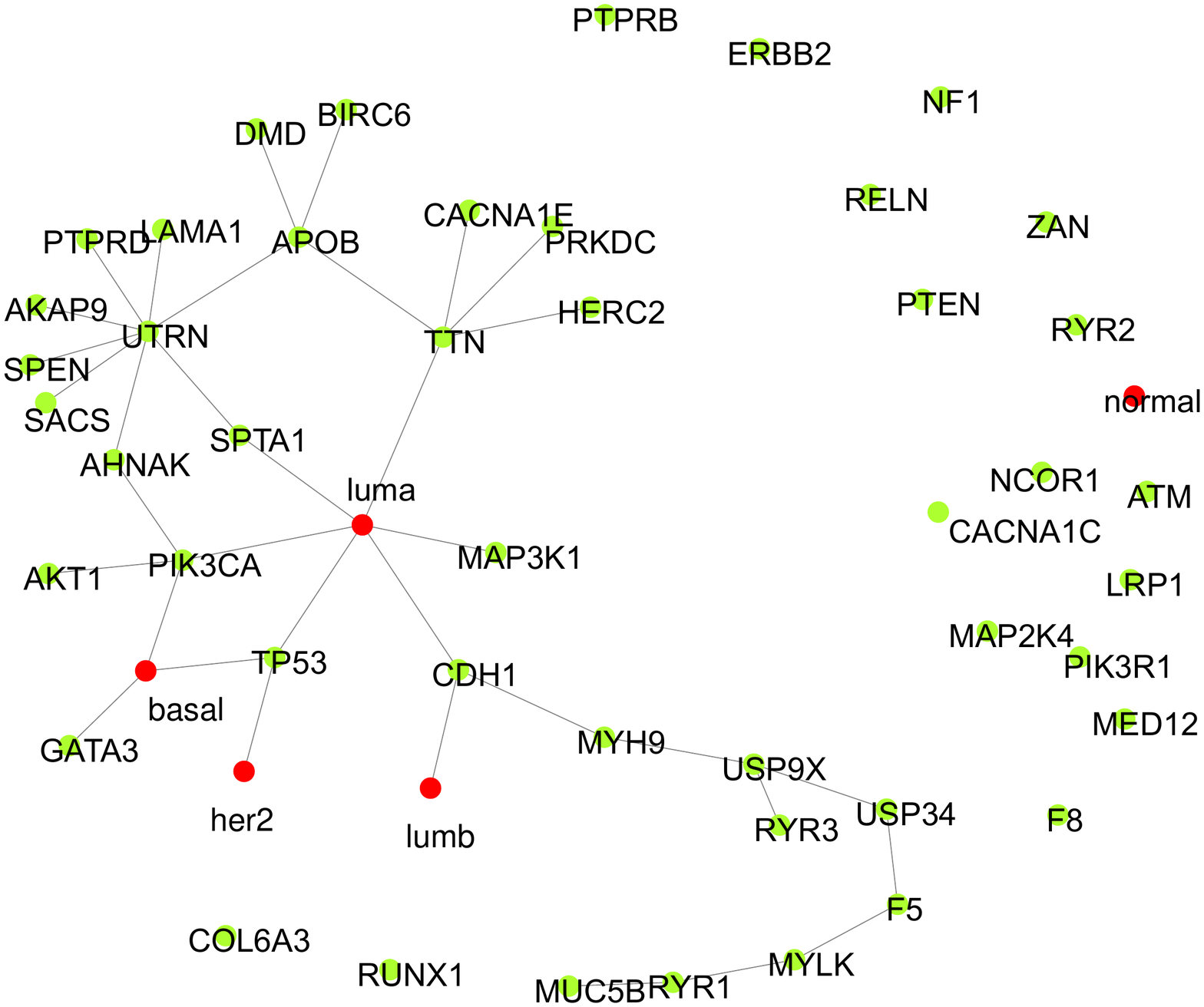} \label{fig:ms2}}
\caption{Mixed network of gene mutations and BC subtypes}
\end{figure}

%\begin{figure}[H]
%\centering
%\includegraphics[scale=0.4]{mutation_subtype1}
%\caption{Gene mutation and BC subtypes mixed network based on AND rule.}
 %\label{fig:ms1}
%\end{figure}

%\begin{figure}[H]
%\centering
%\includegraphics[scale=0.4]{mutation_subtype2}
%\caption{Gene mutation and BC subtypes mixed network based on OR rule.}
%\label{fig:ms2}
%\end{figure}

\subsubsection{Mutation data, enriched PAM50 expression data and the BC subtypes}
{\bf Analysis:}  We constructed mixed graphical models based on the mutation binary variables described above,  the 50 Gaussian gene expression variables defining the PAM50 panel and the five BC subtypes. We also enrich the PAM50 expression panel with 15 genes selected using the following procedure: 1) We performed univariate analyses between each BC subtype and the 19,672 gene expression using logistic regression and selected those that passed a Bonferroni corrected p-value of 5\%; 2) We then built a graphical model of the BC subtypes and the 10,574 significant genes and selected only 20 genes that were connected to any of the BC subtypes. Since 5 of them overlapped with the PAM50 panel, we added only 15 genes to this panel. The final analysis included 681 women with known BC subtypes. {\bf Results:} Figure \ref{fig:em} illustrates the main advantage of this multi-omic analysis, which is to identify mutations that are driving the PAM50 expression and sub-typing of breast cancers. The driver mutations are related to the genes SPTA1, AKT1, PIK3CA, GATA3, MAP3K1, CDH1, TP53, ATM, NCOR1, CACNA1C, ERBB2 and TTN. Figure \ref{fig:em3}, which is a zoom-in version of Figure \ref{fig:em}, shows that the inclusion of additional gene expressions also adds information about new driver mutations such as NCOR1 mutation, which regulates the expression of the gene ASCC3 but it does not seem that these additional gene expressions are correlated with the BC subtypes.

\begin{figure}[H]
\centering
\includegraphics[width=\linewidth]{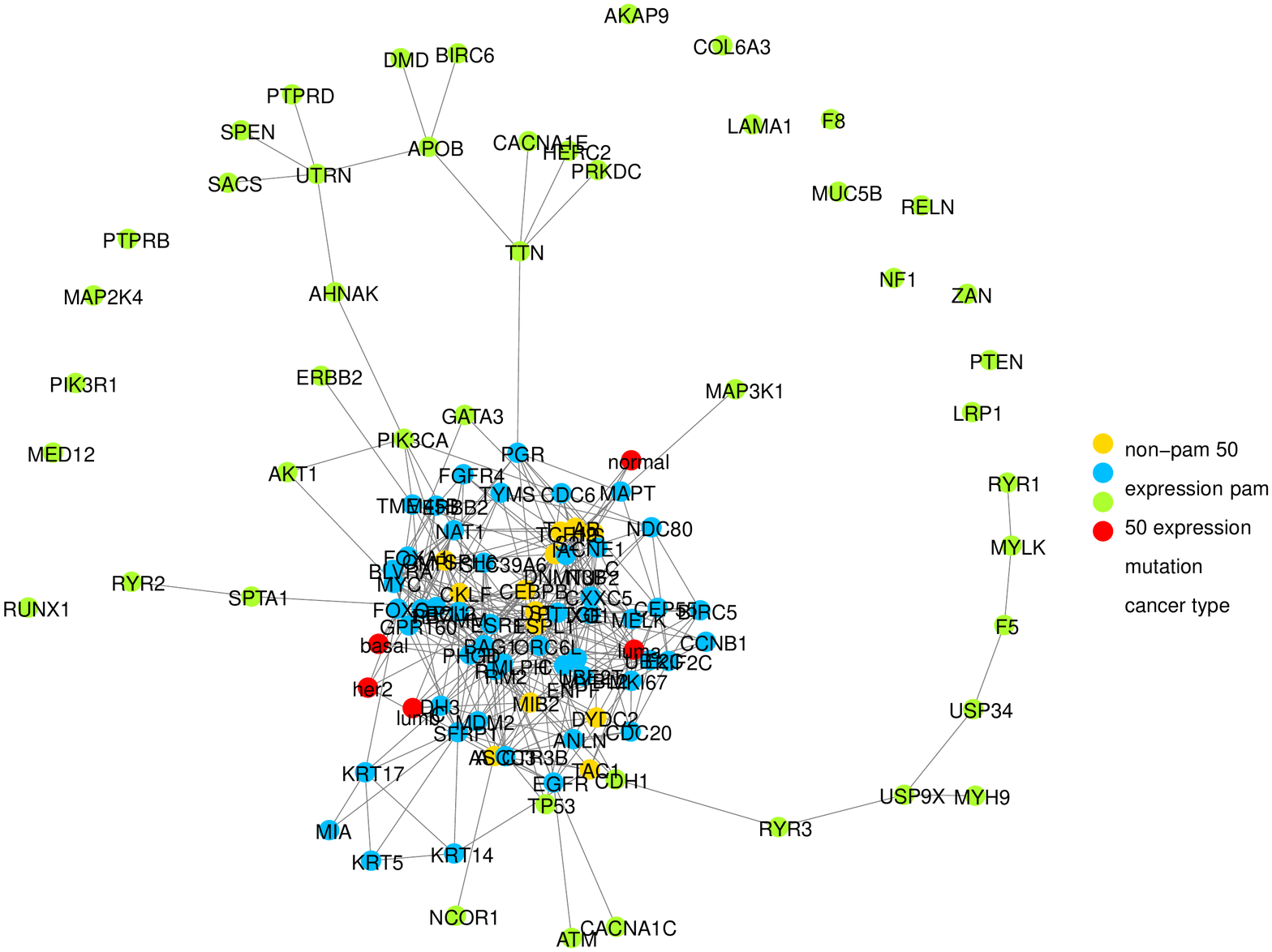}
\caption{The network of the mutation data, enriched PAM50 expression data and the BC subtypes}
\label{fig:em}
\end{figure}

%\begin{figure}[H]
%\centering
%\includegraphics[width=\linewidth]{}
%\caption{The network of mutation data, all expression data and BC subtypes}
%\label{fig:em2}
%\end{figure}

\begin{figure}[H]
\centering
\includegraphics[width=\linewidth]{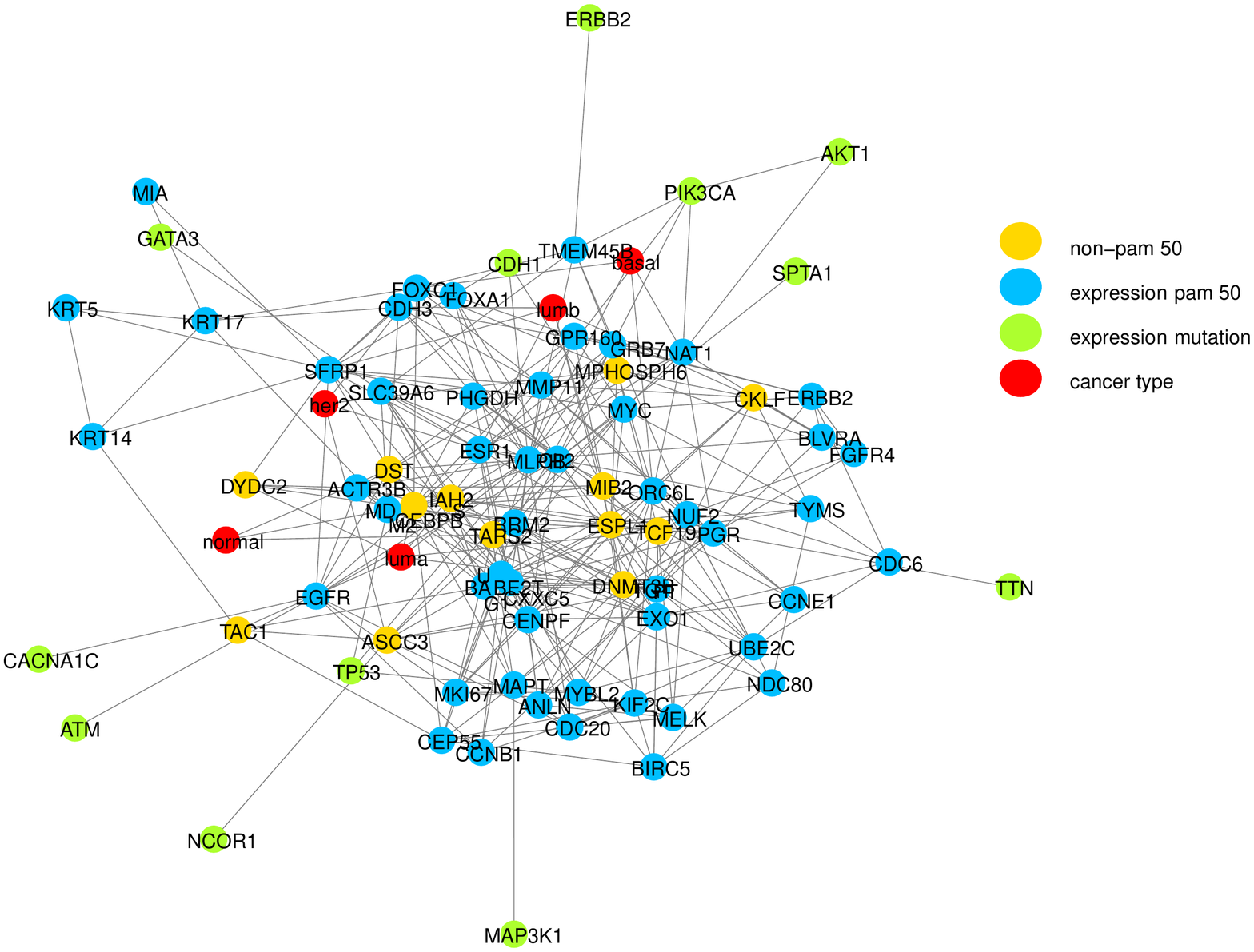}
\caption{Zoom-in version of the network of the mutation data, enriched PAM50 expression data and the BC subtypes}
\label{fig:em3}
\end{figure}

%\subsubsection{Sensitivity to priors}

\pagebreak

\section{Discussion and future work}
In this paper, we proposed  a novel mixed graphical model to analyze multi-omic data of different types (continuous, discrete and count) and performed model selection by extending the Birth-Death MCMC (BDMCMC) algorithm initially proposed by \citet{stephens2000bayesian} and later developped  by \citet{mohammadi2015bayesian} and \citet{dobra2018loglinear}. First, we used the BIC and extended BIC to approximate the marginal likelihood $p(D|M)$ needed to perform model selection with the SBDMCMC algorithm; Second, for  graphical model selection, we applied local neighbourhood selection  instead of global structure learning. The marginal likelihood approximation by BIC allows to handle various types of variables in model selection problems while the local neighbourhood search improves the computational efficiency of the MCMC.  In our simulations, one needs less than 100 steps for the SBDMCMC algorithm to converge compared to thousands of steps under the standard BDMCMC algorithm \citet{mohammadi2015bayesian} and \citet{dobra2018loglinear}. \\
\indent Our simulation studies assessed the good performance of the SBDMCMC approach both in terms of computational efficiency and accuracy of model selection when compared to competing approaches such as the original BDMCMC algorithm or a neighbourhood selection approach based on $l_1$-penalized regression method (MGM).  The SBDMCMC approach performs better for inferring  both scale free networks and random networks when the networks contain discrete and mixed types of variables. When the network is only composed of Gaussian variables, the BDMCMC algorithm performed slightly better than the SBDMCMC but only in the situation of random graphs. \\
\indent  Our real data application illustrated the interest of the SBDMCMC approach to analyze multi-omic data from the TCGA consortium. Our analyses were able to validate some of the major gene mutations and gene expressions associated with breast cancer subtypes. In addition, the analysis of mutation data, enriched PAM50 expression data and the BC subtypes showed that we could discover new driver mutations such as NCOR1, which could help refine the definition of the BC subtypes. \\
\indent In our future work, we would like to investigate further whether the model prior $\pi(M)$ plays an important role in selecting a good sparse model in high-dimensional data. We would like therefore to study the choice of prior and how the prior affects the model selection results. A prior driven by the given data could be of interest for instance. Our applications show the interest of the SBDMCMC method when inferring graph structures around known BC subtypes. It would be of interest to extend this approach to also perform new subtype discoveries or refine the existing BC subtypes. \\
\indent All the R codes used to run the simulations and real data analysis experiments can be found in GitHub:

 \url{https://github.com/wangnanwei/Birth-death-MCMC-Model-Selection}

%\HM{In the reference of Capp\'e et al.below, should "mcmc" be "MCMC"? Also in Chen and Chen "bayesian" should be "Bayesian". For ENCODE 2011, "dna" should be "DNA". For Halsbeck and Waldorp, what does "mgm" stand for?. In Mitsakakis et al., it should be "Metropolis-Hastings" with capital letters. In Mohammadi and Witt, "Gaussian". In Mohammadi and Witt(2019), "BDgraph: an R package for Bayesian". In Ravikumar et al. "Ising". In Schwarz, " The Annals of Statistics". In Scott and Berger "empirical-Bayes". In Stephens (2000), "The Annals of Statistics". In Zhang (2010), "The Annals of Statistics", In Zou (2006), "The Journal of the American Statistical Association"}

% AOS,AOAS: If there are supplements please fill:
%\begin{supplement}[id=suppA]
%  \sname{Supplement A}
%  \stitle{Title}
%  \slink[doi]{10.1214/00-AOASXXXXSUPP}
%  \sdatatype{.pdf}" 
%  \sdescription{Some text}
%\end{supplement}
\bibliographystyle{apalike}
\bibliography{ref}

\begin{thebibliography}{}

\bibitem[Allen et~al., 2013]{allen2013local}
Allen, G., Liu, Z., et~al. (2013).
\newblock A local poisson graphical model for inferring networks from
  sequencing data.
\newblock {\em IEEE Trans NanoBiosci}, 12(3):189--98.

\bibitem[Baldi et~al., 2000]{baldi2000assessing}
Baldi, P., Brunak, S., Chauvin, Y., Andersen, C.~A., and Nielsen, H. (2000).
\newblock Assessing the accuracy of prediction algorithms for classification:
  an overview.
\newblock {\em Bioinformatics}, 16(5):412--424.

\bibitem[Bernstein et~al., 2010]{Roadmap2010}
Bernstein, B., Stamatoyannopoulos, J., Costello, J., et~al. (2010).
\newblock The nih roadmap epigenomics mapping consortium.
\newblock {\em Nature Biotechnology}, 28(10):1045--1048.

\bibitem[{Cancer Genome Atlas Network}, 2012]{TCGA2012}
{Cancer Genome Atlas Network} (2012).
\newblock Comprehensive molecular portraits of human breast tumours.
\newblock {\em Nature}, 490(7418):61--70.

\bibitem[Cancer Genome Atlas Research~Network et~al., 2013]{TCGA2013}
Cancer Genome Atlas Research~Network, Weinstein, J., Collisson, E., Mills, G.,
  Shaw, K., Ozenberger, B., Ellrott, K., Shmulevich, I., Sander, C., and
  Stuart, J. (2013).
\newblock The cancer genome atlas pan-cancer analysis project.
\newblock {\em Nature Genetics}, 45(10):1113--1120.

\bibitem[Capp{\'e} et~al., 2002]{cappe2002reversible}
Capp{\'e}, O., Robert, C., Ryd{\'e}n, T., and Enz, T. (2002).
\newblock Reversible jump mcmc converging to birth-and-death mcmc and more
  general continuous time samplers.

\bibitem[Chen and Chen, 2008]{chen2008extended}
Chen, J. and Chen, Z. (2008).
\newblock Extended bayesian information criteria for model selection with large
  model spaces.
\newblock {\em Biometrika}, 95(3):759--771.

\bibitem[Chen et~al., 2014]{chen2014selection}
Chen, S., Witten, D., and Shojaie, A. (2014).
\newblock Selection and estimation for mixed graphical models.
\newblock {\em Biometrika}, 102(1):47--64.

\bibitem[Cheng et~al., 2017]{cheng2017high}
Cheng, J., Li, T., Levina, E., and Zhu, J. (2017).
\newblock High-dimensional mixed graphical models.
\newblock {\em Journal of Computational and Graphical Statistics},
  26(2):367--378.

\bibitem[Colaprico et~al., 2016]{Colaprico2016}
Colaprico, A., Silva, T., Olsen, C., Garofano, L., Cava, C., Garolini, D.,
  Sabedot, T., Malta, T., et~al. (2016).
\newblock Tcgabiolinks: an r/bioconductor package for integrative analysis of
  tcga data.
\newblock {\em Nucleic Acids Res}, 44(8):e71.

\bibitem[Curtis et~al., 2012]{Curtis2012}
Curtis, C., Shah, S., Chin, S., Turashvili, G., Rueda, O., et~al. (2012).
\newblock The genomic and transcriptomic architecture of 2,000 breast tumours
  reveals novel subgroups.
\newblock {\em Nature}, 486:346--352.

\bibitem[Dobra and Mohammadi, 2018]{dobra2018loglinear}
Dobra, A. and Mohammadi, R. (2018).
\newblock Loglinear model selection and human mobility.
\newblock {\em The Annals of Applied Statistics}, 12(2):815--845.

\bibitem[{ENCODE}, 2011]{ENCODE2011}
{ENCODE} (2011).
\newblock Encode project consortium: A user’s guide to the encyclopedia of
  dna elements (encode).
\newblock {\em Nature Genetics}, 9(4):e1001046.

\bibitem[Fan and Li, 2001]{fan2001variable}
Fan, J. and Li, R. (2001).
\newblock Variable selection via nonconcave penalized likelihood and its oracle
  properties.
\newblock {\em Journal of the American statistical Association},
  96(456):1348--1360.

\bibitem[Fellinghauer et~al., 2013]{fellinghauer2013stable}
Fellinghauer, B., B{\"u}hlmann, P., Ryffel, M., Von~Rhein, M., and Reinhardt,
  J. (2013).
\newblock Stable graphical model estimation with random forests for discrete,
  continuous, and mixed variables.
\newblock {\em Computational Statistics \& Data Analysis}, 64:132--152.

\bibitem[Friedman et~al., 2008]{friedman2008sparse}
Friedman, J., Hastie, T., and Tibshirani, R. (2008).
\newblock Sparse inverse covariance estimation with the graphical lasso.
\newblock {\em Biostatistics}, 9(3):432--441.

\bibitem[Guedj et~al., 2012]{Guedj2012}
Guedj, M., Marisa, L., de~Reynies, A., Orsetti, B., Schiappa, R., et~al.
  (2012).
\newblock A refined molecular taxonomy of breast cancer.
\newblock {\em Oncogene}, 486:1196--1206.

\bibitem[Haslbeck and Waldorp, 2015]{haslbeck2015mgm}
Haslbeck, J. and Waldorp, L. (2015).
\newblock mgm: Structure estimation for time-varying mixed graphical models in
  high-dimensional data.
\newblock {\em J Stat Softw}.

\bibitem[Lauritzen, 1996]{lauritzen1996graphical}
Lauritzen, S. (1996).
\newblock {\em Graphical models}, volume~17.
\newblock Clarendon Press.

\bibitem[Lee and Hastie, 2015]{lee2015learning}
Lee, J. and Hastie, T. (2015).
\newblock Learning the structure of mixed graphical models.
\newblock {\em Journal of Computational and Graphical Statistics},
  24(1):230--253.

\bibitem[Massam and Wang, 2018]{massam2018local}
Massam, H. and Wang, N. (2018).
\newblock Local conditional and marginal approach to parameter estimation in
  discrete graphical models.
\newblock {\em Journal of Multivariate Analysis}, 164:1--21.

\bibitem[Mohammadi and Wit, 2015]{mohammadi2015bayesian}
Mohammadi, A. and Wit, E. (2015).
\newblock Bayesian structure learning in sparse gaussian graphical models.
\newblock {\em Bayesian Analysis}, 10(1):109--138.

\bibitem[Mohammadi and Wit, 2019]{mohammadi2019bdgraph}
Mohammadi, R. and Wit, E. (2019).
\newblock Bdgraph: An r package for bayesian structure learning in graphical
  models.
\newblock {\em Journal of Statistical Software}, 89(3).

\bibitem[Nan and Yang, 2014]{nan2014variable}
Nan, Y. and Yang, Y. (2014).
\newblock Variable selection diagnostics measures for high-dimensional
  regression.
\newblock {\em Journal of Computational and Graphical Statistics},
  23(3):636--656.

\bibitem[Parker et~al., 2009]{Parker2009}
Parker, J., Mullins, M., Cheang, M., Leung, S., Voduc, D., Vickery, T., et~al.
  (2009).
\newblock Supervised risk predictor of breast cancer based on intrinsic
  subtypes.
\newblock {\em J Clin Oncol.}, 27:1160--7.

\bibitem[Perou et~al., 1999]{Perou1999}
Perou, C., Jeffrey, S., Van De~Rijn, M., Rees, C., Eisen, M., Ross, D., et~al.
  (1999).
\newblock Distinctive gene expression patterns in human mammary epithelial
  cells and breast cancers.
\newblock {\em Proc Natl Acad Sci USA}, 96:9212--7.

\bibitem[Ravikumar et~al., 2010]{ravikumar2010high}
Ravikumar, P., Wainwright, M.~J., Lafferty, J., et~al. (2010).
\newblock High-dimensional ising model selection using ℓ1-regularized
  logistic regression.
\newblock {\em The Annals of Statistics}, 38(3):1287--1319.

\bibitem[Schwarz, 1978]{schwarz1978estimating}
Schwarz, G. (1978).
\newblock Estimating the dimension of a model.
\newblock {\em The annals of statistics}, 6(2):461--464.

\bibitem[Scott and Berger, 2010]{scott2010bayes}
Scott, J.~G. and Berger, J.~O. (2010).
\newblock Bayes and empirical-bayes multiplicity adjustment in the
  variable-selection problem.
\newblock {\em The Annals of Statistics}, pages 2587--2619.

\bibitem[Sorlie et~al., 2001]{Sorlie2001}
Sorlie, T., Perou, C., Tibshirani, R., Aas, T., Geisler, S., Johnsen, H.,
  et~al. (2001).
\newblock Gene expression patterns of breast carcinomas distinguish tumor
  subclasses with clinical implications.
\newblock {\em Proc Natl Acad Sci USA}, 98:10869--74.

\bibitem[Stephens, 2000]{stephens2000bayesian}
Stephens, M. (2000).
\newblock Bayesian analysis of mixture models with an unknown number of
  components-an alternative to reversible jump methods.
\newblock {\em Annals of statistics}, pages 40--74.

\bibitem[Tibshirani, 1996]{tibshirani1996regression}
Tibshirani, R. (1996).
\newblock Regression shrinkage and selection via the lasso.
\newblock {\em Journal of the Royal Statistical Society. Series B
  (Methodological)}, pages 267--288.

\bibitem[Turner and Reis-Filho, 2013]{Turner2013}
Turner, N. and Reis-Filho, J. (2013).
\newblock Tackling the diversity of triple negative breast cancer.
\newblock {\em Clin Cancer Res}, 19:6380--8.

\bibitem[Wasserman, 2000]{wasserman2000bayesian}
Wasserman, L. (2000).
\newblock Bayesian model selection and model averaging.
\newblock {\em Journal of mathematical psychology}, 44(1):92--107.

\bibitem[Weigelt et~al., 2011]{Weigelt2011}
Weigelt, B., Pusztai, L., Ashworth, A., and Reis-Filho, J. (2011).
\newblock Challenges translating breast cancer gene signatures into the clinic.
\newblock {\em Nat Rev Clin Oncol}, 9:58--64.

\bibitem[Yang et~al., 2014]{yang2014mixed}
Yang, E., Baker, Y., Ravikumar, P., Allen, G., and Liu, Z. (2014).
\newblock Mixed graphical models via exponential families.
\newblock In {\em Artificial Intelligence and Statistics}, pages 1042--1050.

\bibitem[Zhang, 2010]{zhang2010nearly}
Zhang, C.-H. (2010).
\newblock Nearly unbiased variable selection under minimax concave penalty.
\newblock {\em The Annals of statistics}, 38(2):894--942.

\bibitem[Zou, 2006]{zou2006adaptive}
Zou, H. (2006).
\newblock The adaptive lasso and its oracle properties.
\newblock {\em Journal of the American statistical association},
  101(476):1418--1429.

\end{thebibliography}
\end{document}